\setlist{nolistsep}
\theoremstyle{plain}
\newtheorem{theorem}{Theorem}[section]
\newtheorem{proposition}[theorem]{Proposition}
\newtheorem{lemma}[theorem]{Lemma}
\theoremstyle{definition}
\newtheorem{definition}[theorem]{Definition}
\theoremstyle{remark}
\newcommand{\R}{\mathbb{R}}
\newcommand{\C}{\mathbb{C}}
\newcommand{\skewsp}{\hat{\mathcal{S}}}
\newcommand{\quoted}[1]{``#1''}
\definecolor{Arm}{rgb}{0.3,0.2,0.7}
\definecolor{Mpl}{rgb}{0.18, 0.49, 0.196}
\icmltitlerunning{The Generalized Skew Spectrum of Graphs}
\begin{document}

\twocolumn[
\icmltitle{The Generalized Skew Spectrum of Graphs}

\icmlsetsymbol{equal}{*}

\begin{icmlauthorlist}
\icmlauthor{Armando Bellante}{mpq,mqcst,polimi}
\icmlauthor{Martin Pl\'avala}{hannover,siegen}
\icmlauthor{Alessandro Luongo}{cqt,inveriant}
\end{icmlauthorlist}

\icmlaffiliation{polimi}{Dipartimento di Elettronica, Informazione e Bioingegneria, 
Politecnico di Milano, Milan, Italy}
\icmlaffiliation{mpq}{Max-Planck-Institut f\"ur Quantenoptik, Hans-Kopfermann-Str. 1, 85748 Garching, Germany}
\icmlaffiliation{mqcst}{Munich Center for Quantum Science and Technology (MCQST), Schellingstr. 4, 80799 M\"unchen, Germany}
\icmlaffiliation{siegen}{Naturwissenschaftlich-Technische Fakult\"at, Universit\"at Siegen, Siegen, Germany}
\icmlaffiliation{hannover}{Institute of Theoretical Physics, Leibiniz Universit\"at Hannover, Appelstra{\ss}e 2, 30167 Hannover, Germany}
\icmlaffiliation{cqt}{Centre for Quantum Technologies, National University of Singapore, Singapore}
\icmlaffiliation{inveriant}{Inveriant Pte. Ltd., Singapore}

\icmlcorrespondingauthor{Armando Bellante}{armando.bellante@polimi.it}

\icmlkeywords{Machine Learning, ICML, graph representation, invariant, permutation, generalized Skew Spectrum, embedding, group theory, representation theory, fourier transform, finite groups, permutation group}

\vskip 0.3in
]

\printAffiliationsAndNotice{}


\begin{abstract}
This paper proposes a family of permutation-invariant graph embeddings, generalizing the Skew Spectrum of graphs of \citet{kondor2008skew}. Grounded in group theory and harmonic analysis, our method introduces a new class of graph invariants that are isomorphism-invariant and capable of embedding richer graph structures - including attributed graphs, multilayer graphs, and hypergraphs - which the Skew Spectrum could not handle. Our generalization further defines a family of functions that enables a trade-off between computational complexity and expressivity. By applying generalization-preserving heuristics to this family, we improve the Skew Spectrum's expressivity at the same computational cost. We formally prove the invariance of our generalization, demonstrate its improved expressiveness through experiments, and discuss its efficient computation.
\end{abstract}


\begin{figure}[t]
    \centering
    \begin{subfigure}{0.40\linewidth}
        \includegraphics[width=\linewidth]{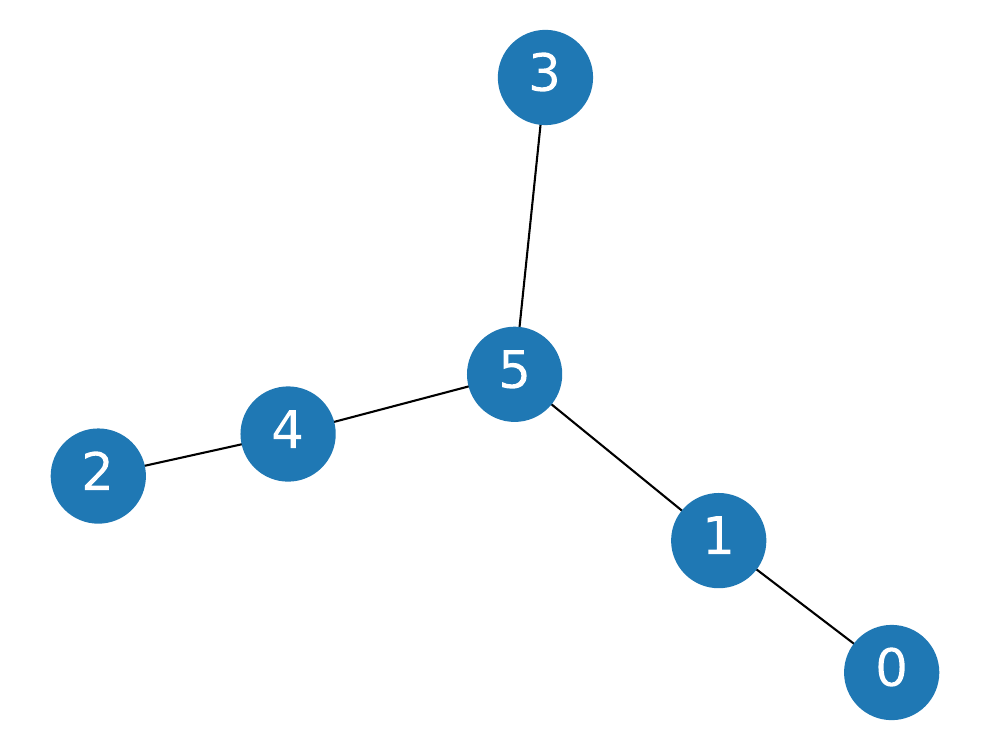}
        \caption{G1.}
        \label{subfig: G1}
    \end{subfigure}
    \centering
    \begin{subfigure}{0.40\linewidth}
        \includegraphics[width=\linewidth]{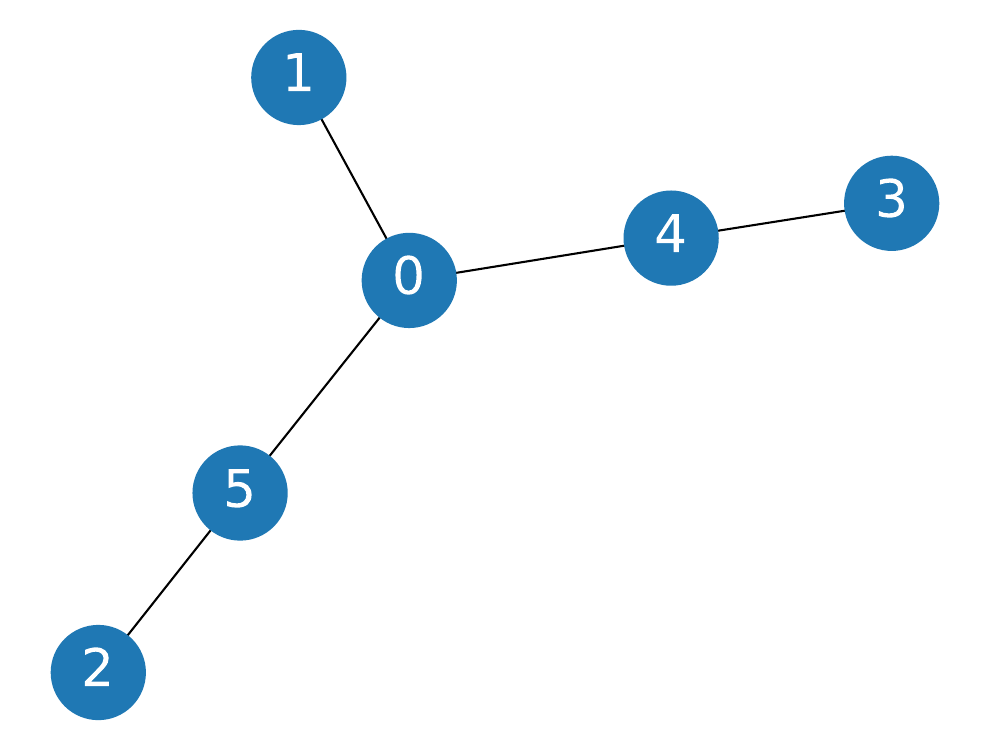}
        \caption{G2.}
        \label{subfig: G2}
    \end{subfigure}
    \centering
    \begin{subfigure}{0.40\linewidth}
        \includegraphics[width=\linewidth]{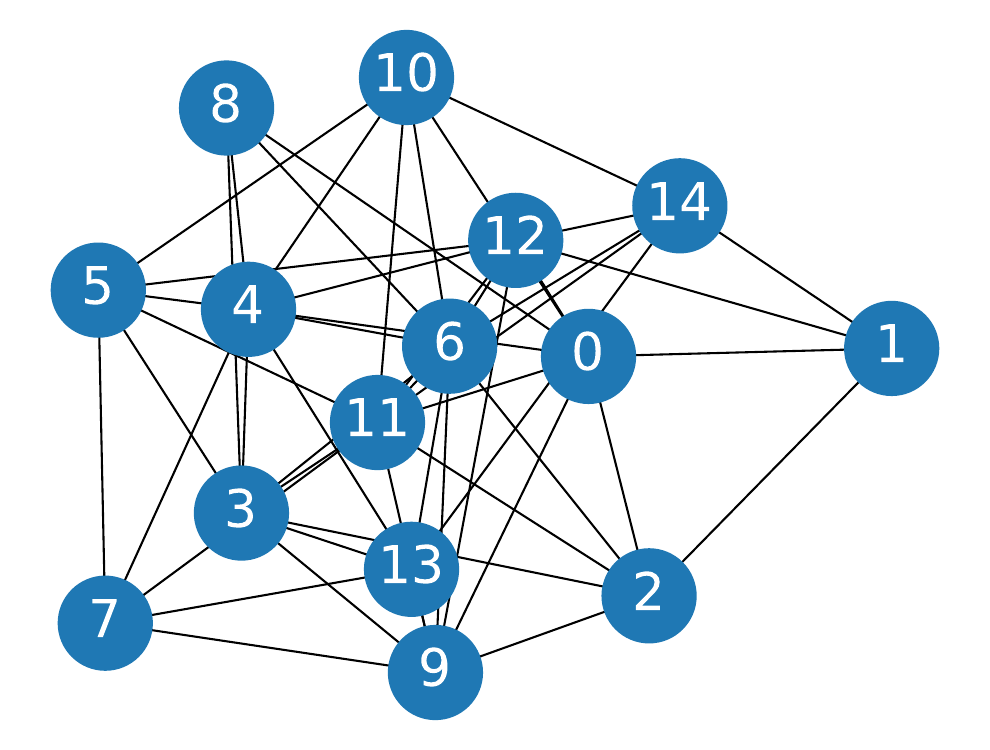}
        \caption{G3.}
        \label{subfig: G3}
    \end{subfigure}
    \centering
    \begin{subfigure}{0.40\linewidth}
        \includegraphics[width=\linewidth]{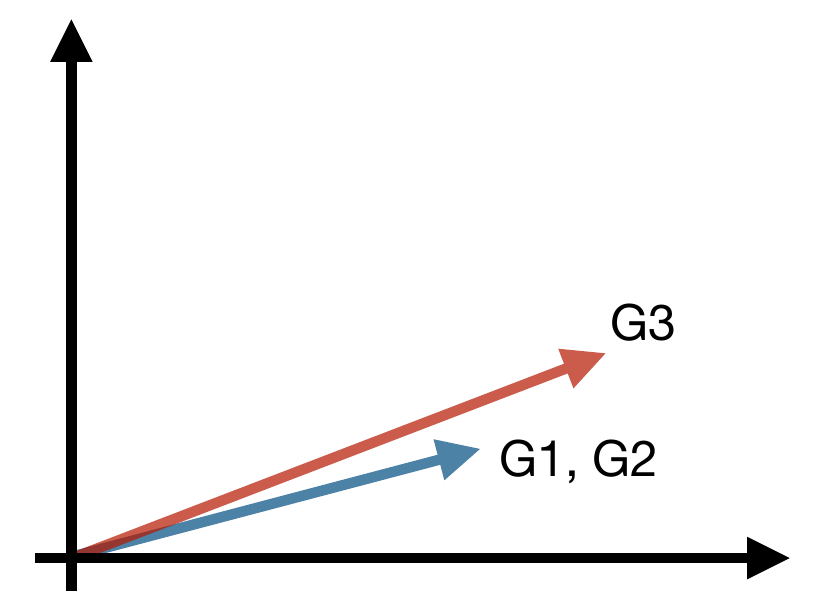}
        \caption{Embedding in $\mathbb{R}^2$.}
        \label{subfig:embedding}
    \end{subfigure}
    \caption{Three graphs and a permutation-invariant embedding.}
    \label{fig: embedding example}
\end{figure}

\section{Introduction}
Graph-structured data is ubiquitous across domains, from molecular structures in chemistry~\cite{kosmala2023ewald} to social networks in computational social science~\cite{nettleton2013data}, or fraud detection in banking~\cite{pourhabibi2020fraud}. 
However, developing adequate graphs representations remains a fundamental challenge in graph analysis~\cite{hamilton2020graph}.
Traditional graph representations, such as adjacency matrices, are highly sensitive to the conventional node numbering used when acquiring the graph data.
For instance, graphs $G1$ and $G2$ in Figure~\ref{fig: embedding example} are isomorphic, but their different node numbering would yield distinct adjacency matrices.
Graph representations that are sensitive to initial node numbering make learning on graphs a harder task, similarly to how rotation- and translation-sensitive representations make learning on images challenging.
This motivates the need for \emph{permutation-invariant graph embeddings}, also known as \emph{graph invariants}.

A permutation-invariant embedding is a mapping that represents graphs as vectors while ensuring that isomorphic graphs are assigned the same vector. 
The \emph{expressivity}, or \emph{completeness}, of such an embedding is determined by its ability to distinguish non-isomorphic graphs.
An embedding is \emph{complete} if it maps all non-isomorphic graphs to different vectors. 
However, as of today, achieving both computational efficiency and completeness requires a trade-off, as determining wether two graphs are isomorphic is a paramount example of an NP-intermediate problem~\cite{babai2016graph}. 

Various approaches have been proposed to construct graph invariants, and the challenge has gained renewed importance with the rise of graph neural networks (GNNs)~\cite{scarselli2008graph} and their applications across scientific disciplines~\cite{corso2024graph}. 
Modern methods primarily rely on message-passing neural networks~\cite{gilmer2017neural} or spectral properties of the graph Laplacian~\cite{hammond2011wavelets}. While these approaches have achieved practical success, they often lack theoretical guarantees about their expressivity beyond the Weisfeiler-Lehman test~\cite{morris2019weisfeiler, xu2018powerful} or do not generalize naturally to richer graph structures like attributed graphs, multilayer graphs, and hypergraphs. This limitation becomes particularly crucial as such complex graph structures increasingly appear in real-world applications, from multi-omics biological networks to heterogeneous social networks.

This leads us to investigate the following research questions:
\begin{quote}
\emph{How can we construct permutation-invariant graph embeddings that (i) handle rich graph structures beyond simple graphs, (ii) are mathematically well-grounded, and (iii) offer practical computational efficiency?}

\emph{Furthermore, can we develop a theoretical framework that enables principled trade-offs between computational efficiency and completeness?}
\end{quote}

Group theory offers a principled and complementary framework for constructing graph invariants with provable properties~\cite{bronstein2021geometric, dehmamy2021automatic, batatia2023general}. A seminal contribution in this direction was the Skew Spectrum~\cite{kondor2008skew}, which introduced a rigorous method for constructing graph invariants through functions on the symmetric group. While theoretically elegant, the original Skew Spectrum could not handle rich graph structures and offered no clear pathway to trade off between computational complexity and expressivity.
Our work revisits and improves these older group-theoretic foundations to address modern challenges in graph representation, opening new research directions.

In this work, we introduce a generalization of the Skew Spectrum. Our contributions are threefold:
\begin{enumerate}
    \item \emph{Multi-Orbit Spectra}: We develop a mathematically rigorous family of permutation-invariant graph embeddings that naturally extend to attributed graphs, multilayer graphs, and hypergraphs; 
    \item \emph{$k$-Correlation Spectra}: We establish a theoretical framework that enables principled trade-offs between computational complexity and expressivity; 
    \item \emph{Doubly-Reduced $k$-Spectra}: We introduce heuristics that preserve the generalizations and improve the expressivity of the Skew Spectrum without increasing its computational cost, bridging theory and practice.
\end{enumerate}

We illustrate our theoretical contributions with numerical experiments, demonstrating that our generalizations significantly improve the Skew Spectrum expressivity: distinguishing richer graphs, and distinguishing more non-isomorphic simple graphs at the same computational complexity. 


\section{Background} 
\label{sec: background}
Our work extends the seminal paper of \citet{kondor2008skew} on the Skew Spectrum of graphs.
In this section, we review the key concepts, laying the ground for our results.

\paragraph{Notation.}
The symbol $\bigoplus$ denotes a direct sum. For example, $\bigoplus_{i=1}^n i$ denotes a diagonal matrix with entries $1, \dots, n$ along the diagonal.
The symbol $\bigotimes$ denotes a sequence of tensor products. For instance, $\bigotimes_{i=1}^2 v_i = v_1 \otimes v_2$.
We use $[n]$ to denote the set of integers $\{1,\dots, n\}$. 
Similarly, $[a, b]$ refers to the set of integers between $a$ and $b$, inclusive.

\subsection{The Symmetric Group}
The symmetric group $\mathbb{S}_n$ consists of all permutations of $n$ elements.
A permutation $\sigma \in \mathbb{S}_n$ is a bijection of the indices of these elements, where $\sigma(i)$ denotes the image of index $i$ under $\sigma$.
We use cycle notation, e.g., $\sigma=(1, 4, 3)$ maps $1 \rightarrow 4$, $4 \rightarrow 3$, and $3 \rightarrow 1$.
For $k < n$, the subgroup $\mathbb{S}_k$ a naturally embeds in $\mathbb{S}_n$, by acting on the first $k$ indices while stabilizing the last $n-k$.
This induces a natural partition of $\mathbb{S}_n$ into right and left cosets, leading to the coset tranversals $\mathbb{S}_k\backslash \mathbb{S}_n$ and $\mathbb{S}_n / \mathbb{S}_k$.
These right (left) sets contain exactly one representative element per coset: a unique $x \in \mathbb{S}_k\backslash \mathbb{S}_n$ ($y \in \mathbb{S}_n / \mathbb{S}_k$), such that any $\sigma \in \mathbb{S}_n$ can be decomposed as $\sigma = h_1 x$ ($\sigma = y h_2$) for some $h_1 \in \mathbb{S}_{k}$ ($h_2 \in \mathbb{S}_k$).
This definition extends to double-coset transversals $\mathbb{S}_k\backslash \mathbb{S}_n / \mathbb{S}_k$, where any $\sigma \in \mathbb{S}_n$ can be written as $\sigma = h_3 z h_4$ for a unique $z \in \mathbb{S}_k\backslash \mathbb{S}_n / \mathbb{S}_k$ and $h_3, h_4 \in \mathbb{S}_k$.
For example, $\mathbb{S}_{n-2}\backslash \mathbb{S}_n / \mathbb{S}_{n-2}$ consists of the following $7$ group elements: $\{(), (n-1, n), (n-2, n-1), (n-2, n),(n-2, n-1, n), (n-2, n, n-1), (n-3, n-1)(n-2, n)\}$.

We denote irreducible representations (irreps) of $\mathbb{S}_n$ by $\rho$. 
These are mappings from group elements to unitary matrices, and can be labeled by partitions of $n$.
Specifically, we use the Young Orthogonal Representation (YOR), which has real-valued entries.
The set of YORs is denoted by $\mathrm{Irr}(\mathbb{S}_n)$, and we define $\Lambda_n$ as the subset of four irreps indexed by the partitions $(n), (n-1,1), (n-2, 2), (n-2, 1, 1)$.

We refer the interested reader to Appendix~\ref{apx: symmetric group} and the background sections (2.6 and 2.7) of \citet{armando2024quantum} for more.

\subsection{Graph isomorphism and functions on $\mathbb{S}_n$}
The adjacency matrix $A$ of a graph encodes its structure, with $A_{i,j}$ specifying the relationship between nodes $i$ and $j$.
The construction of $A$ inherently depends on the node numbering, and since graphs are often labeled arbitrarily, the same structure can be represented by multiple distinct adjacency matrices, complicating graph comparison tasks.

Graph isomorphism is the problem of determining whether two graphs $G1$ and $G2$ have the same structure despite differences in node numbering.
Formally, for adjacency matrices $A^{(1)}$ and $A^{(2)}$, $G1$ and $G2$ are isomorphic if there exists a permutation $\sigma \in \mathbb{S}_n$ such that $A^{(1)}_{i,j} = A^{(2)}_{\sigma(i), \sigma(j)}$.
Here, $\sigma$ reorders the node indices of $G2$ to align with $G1$, preserving the edge relationships.

To encode adjacency information in a manner that reflects this relationship, we define a function $f:\mathbb{S}_n \rightarrow \R$:
\begin{align}
\label{eq: graph function}
    f(g) = A_{g(n), g(n-1)} \text{ for all } g \in \mathbb{S}_n.
\end{align}
This encoding captures information about node adjacency while naturally reflecting the action of permutations on node indices.
Indeed, using this formalism, two graphs are isomorphic if and only if their corresponding functions satisfy:
\begin{align}
    \exists~\sigma \text{ s.t. } f^{(A_1)}(g) = f^{(A_2)}(\sigma g) \text{ for all } g \in \mathbb{S}_n.
\end{align}
In other words, \emph{graph isomorphism} reduces to verifying whether two functions are related by a \emph{translation} $\sigma\in \mathbb{S}_n$.
The Skew Spectrum leverages this formulation to construct graph invariants through translation-invariant functions on $\mathbb{S}_n$, using correlation functions and their Fourier transforms.

To conclude, note that graph functions (\ref{eq: graph function}) are sparse: they depend only on the image of the last two indices under $g$ ($g(n-1)$ and $g(n)$). 
More formally, $f(\sigma h) = f(\sigma)$ for all $h \in \mathbb{S}_{n-2}$. Thus, $f$ is supported on the coset transversal $\mathbb{S}_n/\mathbb{S}_{n-2}$.
As shown in Fig.~\ref{fig: function on Sn}, this sparsity implies that $f$ takes distinct values on at most $n^2$ elements, rather than $n!$.

\begin{figure}[t]
    \centering
    \begin{subfigure}{0.30\linewidth}
        \resizebox{\linewidth}{!}{ 
            \centering
            \(    A = 
                \begin{bmatrix}
                    0 & 1.3 & 0 & 0\\
                    1.5 & 0 & 0.3 & 0.8\\
                    0 & 0 & 0 & 0 \\
                    1 & 0 & 2 & 0
                \end{bmatrix}
            \)
        }
        \caption{Adjacency matrix.}
        \label{subfig:function on Sn matrix}
    \end{subfigure}
    \hfill
    \centering
    \begin{subfigure}{0.68\linewidth}
        \includegraphics[width=\linewidth]{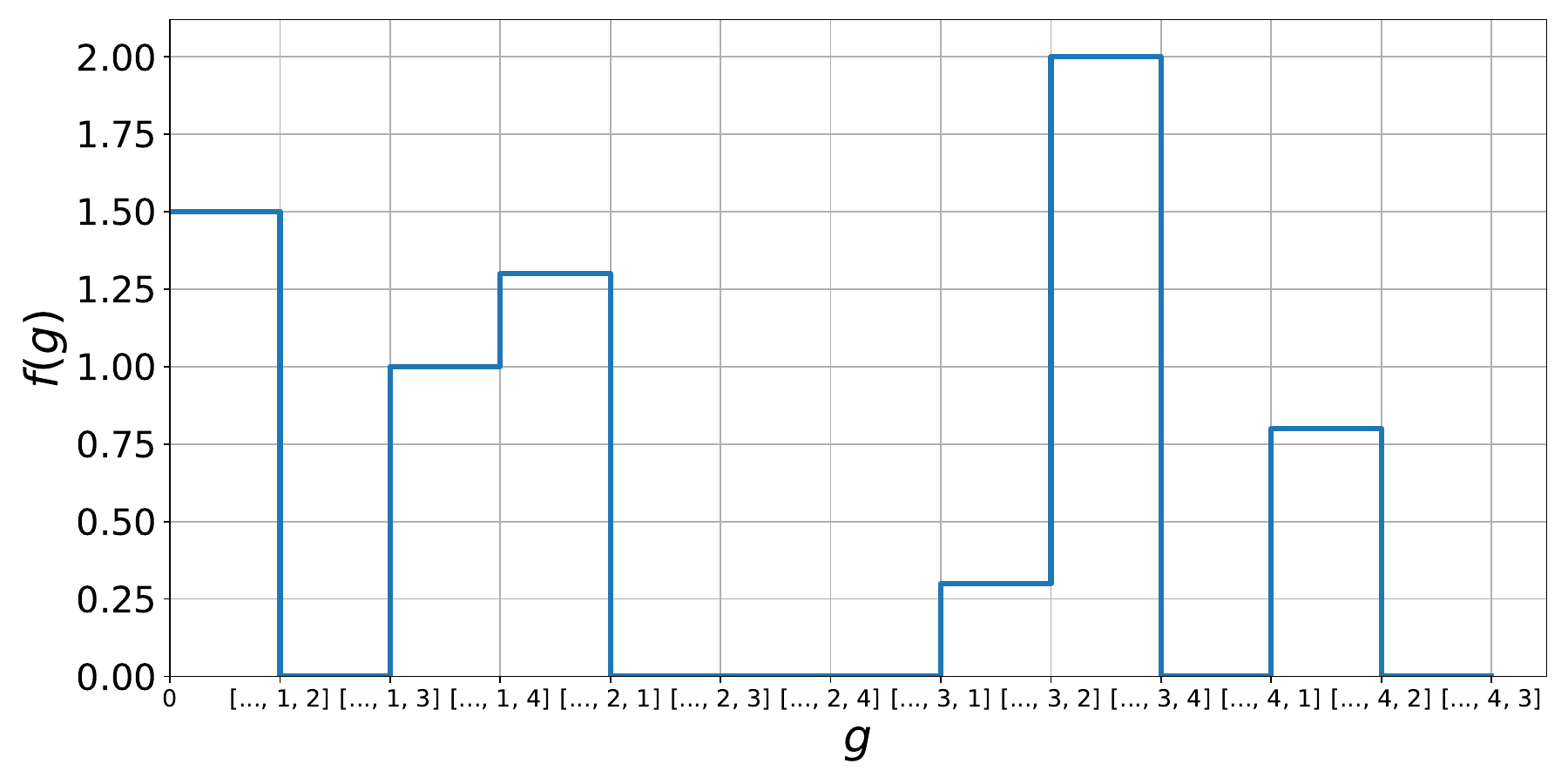}
        \caption{Function $f(g) = A_{g(n), g(n-1)}$. $g$ is represented as a vector where $g_i = g(i)$.
        }
        \label{subfig: function on Sn}
    \end{subfigure}
    \caption{A weighted, directed graph as a function on $\mathbb{S}_4$.
    }
    \label{fig: function on Sn}
\end{figure}

\subsection{Fourier transform on $\mathbb{S}_n$}
The Fourier transform of a function over $\mathbb{S}_n$ is defined as 
\begin{align}
\label{eq: fourier transform}
    \hat{f}(\rho) = \frac{1}{|\mathbb{S}_n|} \sum_{g \in \mathbb{S}_n} f(g) \rho(g)
\end{align}
where $\rho$ is an irreducible representation of $\mathbb{S}_n$.

Standard FFT algorithms require $O(n!\text{polylog}(n!))$ operations to evaluate $\hat{f}(\rho)$ for all irreps~\cite{maslen1998efficient}.
However, for graph-derived functions, taking values on $\mathbb{S}_n/\mathbb{S}_{n-2}$, the complexity reduces to $O(n^3)$ (or even $O(n^2)$) operations~\cite{kondor2008skew,clausen2017linear}.

A key factor in this reduction is the projection $\tau_{\mathbb{S}_{n-2}}(\rho)$, which maps $\rho$ onto the trivial irrep of $\mathbb{S}_{n-2}$. 
This projection induces a sparsity structure in the Fourier transform, enabling efficient computation. 
This sparsity also plays a critical role in the efficient computation of the graph invariants. 
We formalize this sparsity in the following lemma.

\begin{lemma}[Fourier sparsity]
\label{lemma: fourier sparsity}
    Let $f: \mathbb{S}_n/\mathbb{S}_{n-2} \rightarrow \C$. 
    Its Fourier transform using YOR is a collection of sparse square matrices indexed by specific partitions of $n$.
    The sparsity pattern is independent of $f$:
    \begin{enumerate}
        \item $(n)$; size $1$, one scalar entry.
        \item $(n-1)$; size $n-1$, $2$ non-zero columns.
        \item $(n-2, 2)$; size $\frac{n(n-3)}{2}$, $1$ non-zero columns.
        \item $(n-2, 1, 1)$; size $\frac{(n-1)(n-2)}{2}$, $1$ non-zero columns.
    \end{enumerate}
We use $\Lambda_n$ to denote this set of partitions.
\end{lemma}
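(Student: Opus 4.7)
The plan is to leverage the right $\mathbb{S}_{n-2}$-invariance $f(\sigma h)=f(\sigma)$ (for $h\in\mathbb{S}_{n-2}$) to constrain $\hat{f}(\rho)$ algebraically, and then to translate that constraint into a coordinate-sparsity pattern via the branching rules for the symmetric group. First I would substitute $g\mapsto gh^{-1}$ in the defining sum~(\ref{eq: fourier transform}) to derive $\hat{f}(\rho)\,\rho(h)=\hat{f}(\rho)$ for every $h\in\mathbb{S}_{n-2}$, and then average over $h$ to obtain $\hat{f}(\rho)=\hat{f}(\rho)\,P_\rho$, where $P_\rho:=\tfrac{1}{|\mathbb{S}_{n-2}|}\sum_{h\in\mathbb{S}_{n-2}}\rho(h)$ is the projector onto the subspace of $\rho$ on which $\mathbb{S}_{n-2}$ acts trivially. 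Hence every column of $\hat{f}(\rho)$ outside $\mathrm{Im}(P_\rho)$ must vanish.

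Next I would invoke the defining property of YOR, namely that it is adapted to the subgroup chain $\mathbb{S}_n\supset\mathbb{S}_{n-1}\supset\mathbb{S}_{n-2}\supset\cdots$, so that the restriction $\rho_\lambda\!\downarrow_{\mathbb{S}_{n-2}}$ is already block-diagonal in the chosen basis, with blocks indexed by partitions $\nu$ of $n-2$ obtained from $\lambda$ by removing two corner boxes one at a time. Consequently $P_{\rho_\lambda}$ is a diagonal $0/1$ matrix whose rank equals the multiplicity of the trivial $\mathbb{S}_{n-2}$-irrep $(n-2)$ in $\rho_\lambda\!\downarrow_{\mathbb{S}_{n-2}}$, i.e., the number of ways to strip $\lambda$ down to the single-row shape $(n-2)$ via two successive corner removals.

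To finish, I would run the branching in reverse: starting from $(n-2)$, adding one corner box gives either $(n-1)$ or $(n-2,1)$, and adding a second corner box produces exactly $\Lambda_n=\{(n),(n-1,1),(n-2,2),(n-2,1,1)\}$, with path counts $1,2,1,1$ respectively, matching the claimed numbers of non-zero columns. Every other $\lambda\vdash n$ has multiplicity zero, so $\hat{f}(\rho_\lambda)=0$ there. The matrix sizes are then just the YOR dimensions $1$, $n-1$, $\tfrac{n(n-3)}{2}$, $\tfrac{(n-1)(n-2)}{2}$, obtained from the hook-length formula. The one genuine subtlety will be justifying that the sparsity is \emph{coordinate-wise} rather than a mere rank condition; this hinges on the standard but crucial property that YOR diagonalises the $\mathbb{S}_{n-2}$-isotypic projector along the Gelfand--Tsetlin chain, after which the rest reduces to a routine two-step branching count.
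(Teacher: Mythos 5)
Your proposal is correct and follows essentially the same route as the paper's Appendix~B argument: both derive $\hat{f}(\rho)=\hat{f}(\rho)\,\tau_{\mathbb{S}_{n-2}}(\rho)$ from the right $\mathbb{S}_{n-2}$-invariance, use the adaptedness of YOR together with orthogonality to conclude that this projector is a diagonal $0/1$ matrix supported on the trivial-isotypic positions, and then count two-step paths in the Young lattice to get the partitions in $\Lambda_n$ and the column counts. Your path multiplicities $1,2,1,1$ agree with the lemma as stated, so the argument is complete.
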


For a detailed discussion on $\tau_{\mathbb{S}_{n-2}}$ and its connection to the sparsity pattern, see Appendix~\ref{apx: projection on trivial}.

\subsection{The Skew Spectrum of Graphs}
\citet{kondor2008skew} introduced a \emph{translation-invariant} function on  $\mathbb{S}_n$ as the spectrum (the Fourier transform) of a \emph{triple correlation} function, which they termed the Skew Spectrum.
\begin{definition}[Skew Spectrum]
    Let $f:\mathbb{S}_n \rightarrow \C$. An entry of the Skew Spectrum is defined as
    \begin{align} \label{eq:def-skewspectrum}
        \skewsp_f(g_1, \rho) =  \sum_{\substack{\tilde{g} \in \mathbb{S}_n \\ \tilde{g}_2 \in \mathbb{S}_n}} \dfrac{f(\tilde{g}) f(\tilde{g}g_1) f(\tilde{g}_2)}{|\mathbb{S}_n|^2} \rho(\tilde{g})^\dagger \rho(\tilde{g}_2).
    \end{align}
\end{definition}
A Skew Spectrum entry is a (sparse) matrix. The Skew Spectrum is uniquely determined by a small subset of entries - specifically, it is indexed by $7$ group elements ($\mathbb{S}_{n-2}\backslash \mathbb{S}_n / \mathbb{S}_{n-2}$) and $4$ irreps ($\Lambda_n$).
Moreover, (\ref{eq:def-skewspectrum}) can be expressed as $\skewsp_f(g_1, \rho) = \hat{r}(g_{1}, \rho)^\dagger \hat{f}(\rho)$, where $\hat{r}(g_{1}, \rho) = \sum_{\tilde{g} \in \mathbb{S}_n} \frac{f(\tilde{g})f(\tilde{g}g_1)}{|\mathbb{S}_n|} \rho(\tilde{g})$ is the Fourier transform of another function $r(g_{1}, \sigma)$.
This decomposition, Fourier transforms, and sparsity are key to \emph{efficient computation}.

The Skew Spectrum has up to $85$ non-zero scalar values (features) across all the entries, and they can be computed in $O(n^6)$ operations. 
To improve efficiency, the authors proposed a way to trade expressivity for computational cost, introducing a reduced invariant.

\begin{definition}[Reduced Skew Spectrum]
\label{def: reduced skew spectrum}
    Let $f:\mathbb{S}_n \rightarrow \C$. An entry of the reduced Skew Spectrum is defined as
    \begin{align*}
        \skewsp_f(g_1, \rho) = \sum_{\substack{\tilde{g} \in \mathbb{S}_n \\ \tilde{g}_2 \in \mathbb{S}_n}} \dfrac{f(\tilde{g}) f(\tilde{g}g_1) f(\tilde{g}_2) }{|\mathbb{S}_n|^2} \rho(\tilde{g})^\dagger\tau_{\mathbb{S}_{n-2}}(\rho)^\dagger \rho(\tilde{g}_2).
    \end{align*}
    where $\tau$ projects $\rho$ onto the trivial representation of $\mathbb{S}_{n-2}$.
\end{definition}
This projection reduces the domain of $r(g_{1}, \sigma)$ to $\mathbb{S}_n/\mathbb{S}_{n-2}$, ensuring the same sparsity pattern as in $\hat{f}(\rho)$ and further simplifying the computation. 
The reduced Skew Spectrum’s efficiency is summarized in the following result.
\begin{theorem}[Reduced Skew Spectrum computation]
\label{theorem: reduced skew spec}
    Let $f: \mathbb{S}_n/\mathbb{S}_{n-2} \rightarrow \C$. We can compute its reduced Skew Spectrum in $O(n^3)$ operations. The reduced Skew Spectrum has up to $49$ non-zero entries.
\end{theorem}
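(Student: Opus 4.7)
}

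The plan is to exploit the factorization already hinted at in the background, namely
\[
\skewsp_f(g_1,\rho) \;=\; \hat{r}(g_1,\rho)^\dagger\,\tau_{\mathbb{S}_{n-2}}(\rho)^\dagger\,\hat{f}(\rho),
\]
where $r(g_1,\sigma) = f(\sigma)f(\sigma g_1)/|\mathbb{S}_n|$. The first step is to absorb the projection into $\hat r$ and show that $\hat r(g_1,\rho)\,\tau_{\mathbb{S}_{n-2}}(\rho)$ is itself the Fourier transform of a function that lives on $\mathbb{S}_n/\mathbb{S}_{n-2}$. Concretely, using $\tau_{\mathbb{S}_{n-2}}(\rho) = \frac{1}{|\mathbb{S}_{n-2}|}\sum_{h\in\mathbb{S}_{n-2}}\rho(h)$ and reindexing, one gets $\hat r(g_1,\rho)\,\tau_{\mathbb{S}_{n-2}}(\rho) = \widehat{\bar r}(g_1,\rho)$ where $\bar r(g_1,\sigma) = \frac{1}{|\mathbb{S}_{n-2}|}\sum_{h}r(g_1,\sigma h^{-1})$ is manifestly right-$\mathbb{S}_{n-2}$-invariant. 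Consequently, Lemma \ref{lemma: fourier sparsity} applies verbatim to $\widehat{\bar r}(g_1,\rho)$: for each $\rho\in\Lambda_n$ it has the \emph{same} column sparsity as $\hat f(\rho)$.

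Counting non-zero scalar features is now a routine matrix-product bookkeeping step. The product $\widehat{\bar r}(g_1,\rho)^\dagger\,\hat f(\rho)$ is supported only on the rows indexed by the nonzero columns of $\widehat{\bar r}$ and the columns indexed by the nonzero columns of $\hat f$. With $c_\rho$ denoting the number of non-zero columns from Lemma \ref{lemma: fourier sparsity}, this yields at most $c_\rho^2$ non-zero entries per irrep. Summing over $\Lambda_n$ gives $1+4+1+1 = 7$ per double-coset element $g_1$, and there are $7$ elements in $\mathbb{S}_{n-2}\backslash\mathbb{S}_n/\mathbb{S}_{n-2}$, for a total of $49$.

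For the complexity, I would break the pipeline into three parts. First, compute $\hat f(\rho)$ for $\rho\in\Lambda_n$ in $O(n^2)$ (or $O(n^3)$) operations using the sparse FFT for functions on $\mathbb{S}_n/\mathbb{S}_{n-2}$ referenced before Lemma \ref{lemma: fourier sparsity}. Second, for each of the $7$ choices of $g_1$, form $\bar r(g_1,\cdot)$ on its $O(n^2)$-sized support (each value is a product of two entries of $A$, read off from $f$ and a precomputed lookup of how $g_1$ acts on coset representatives), and Fourier-transform it on $\Lambda_n$ in $O(n^3)$ operations by the same algorithm. Third, multiply the resulting sparse matrices: because only $c_\rho$ columns of $\widehat{\bar r}$ and $\hat f$ are nonzero and the heaviest column has length $\frac{n(n-3)}{2}$, each product costs $O(n^2)$, subdominant to the Fourier step. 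Summed over the $7$ values of $g_1$, the total remains $O(n^3)$.

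The main obstacle I expect is the second step: namely, verifying that $\bar r(g_1,\sigma)$ can really be evaluated pointwise in amortized $O(1)$ so that forming it on $\mathbb{S}_n/\mathbb{S}_{n-2}$ costs only $O(n^2)$. Naively the average over $\mathbb{S}_{n-2}$ has $(n-2)!$ terms, but because both $f(\sigma)$ and $f(\sigma h^{-1} g_1)$ depend only on the images of $n-1$ and $n$, the average over $h$ collapses to a normalized sum over the (at most $O(n^2)$) distinct coset images of $\sigma h^{-1}g_1$, and each such sum only needs to be precomputed once per $g_1$. Making this coset-image bookkeeping precise, and confirming it does not break the $O(n^3)$ budget when combined with the $O(n^3)$ sparse FFT per $g_1$, is the careful work; once done, the non-zero count and the complexity together yield the theorem.
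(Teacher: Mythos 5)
Your plan follows essentially the same route as the paper: absorb the projection $\tau_{\mathbb{S}_{n-2}}(\rho)$ into $\hat r$ so that each reduced-spectrum entry becomes a product of two Fourier transforms sharing the sparsity pattern of Lemma~\ref{lemma: fourier sparsity}, count $1+4+1+1=7$ non-zero scalars per double-coset element (the squares of the surviving column counts) for a total of $49$, and reach $O(n^3)$ by collapsing the average over $\mathbb{S}_{n-2}$ into $O(n)$ partial sums per coset --- exactly the case-by-case bookkeeping that Algorithm~\ref{alg: precomputation} and Appendix~\ref{apx: data structure} perform for the seven double-coset representatives, so your theorem is the $k=3$, $d=1$ specialization of Theorem~\ref{theorem: doubly-reduced spectrum}. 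One minor note: attributing the $4$ non-zero entries to $(n-1,1)$, the irrep with two surviving columns, is the reading consistent with Lemma~\ref{lemma: fourier sparsity}, whereas the paper's complexity paragraph assigns the $4$ to $(n-2,2)$; either way the per-coset total is $7$ and the count of $49$ stands.
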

The reduced Skew Spectrum is the key contribution of \citet{kondor2008skew}, validated experimentally as an effective and efficient tool for representing graphs.
In the next sections, we will present our generalization.


\section{Richer graph structures and multiple orbits}
\label{sec: multi orbit}
While the Skew Spectrum is a powerful invariant for weighted and directed graphs, it does not fully capture richer graph structures such as graphs with node or edge features, multilayer graphs, or hypergraphs. 
To address these limitations, we introduce the Multi-Orbit Skew Spectrum, enabling the analysis of these complex structures.

Richer graph structures are typically represented by a collection of data structures (e.g., adjacency matrices, node feature vectors, etc.) indexed by the graph nodes. 
Under node renumbering, the elements within these structures permute independently without mixing (e.g., node features remain distinct from edge weights). 
These distinct subsets of data are referred to as \emph{orbits}, and two such graph structures are isomorphic if there exists a single permutation that aligns all their orbits.

To illustrate, consider graphs $G1$ and $G2$ in Figure~\ref{fig: embedding example}. Treating node numbers as labels, the two graphs are not isomorphic; e.g., node $5$ has a high degree in $G1$ but a lower degree in $G2$. 
For isomorphism, both the adjacency matrix and the node labels must align under the same permutation.

A natural extension of the invariant is to define separate functions for each orbit (e.g., one for the adjacency matrix and one for node labels) and concatenate their respective Skew Spectra. However, this approach fails to account for correlations between orbits. 
For example, the individual spectra of $G1$ and $G2$ would coincide, even though the permutations linking the adjacency matrix and node labels differ. 
This information loss limits the invariant’s completeness.

To overcome this, we replace scalar functions with vector-valued functions, enabling a unified representation of multiple orbits.
Specifically, we define a function $f: \mathbb{S}_n/\mathbb{S}_{n-2} \rightarrow \C^d$ as the direct sum of single-orbit functions:
\begin{align}
    \label{eq: multi orbit function}
    f(g) = \bigoplus_{i \in [d]} f_i(g)
\end{align}
The pointwise product required for the Skew Spectrum is then replaced by the tensor product.
\begin{definition}[Multi-Orbit Skew Spectrum]
    Let $f:\mathbb{S}_n \rightarrow \C^d$. A Multi-Orbit Skew Spectrum entry is defined as
    $$\label{eq:def-multiorbit-skewspectrum}
        \skewsp_f(g_1, \rho) =  \sum_{\substack{\tilde{g} \in \mathbb{S}_n \\ \tilde{g}_2 \in \mathbb{S}_n}} \dfrac{f(\tilde{g}) \otimes f(\tilde{g}g_1 ) \otimes f(\tilde{g}_2)}{|\mathbb{S}_n|^2} \otimes \left(\rho(\tilde{g})^\dagger \rho(\tilde{g}_2)\right).
    $$
\end{definition}
Expanding the direct sums in $f(g)$, we obtain: 
\begin{align}
\label{eq: multi orbit interactions}
    \hat{\mathcal{S}}_f(g_1, \rho) = \bigoplus_{(i_1, i_2, i_3) \in [d]^3}\hat{r}_{(i_1, i_2)}(g_{1}, \rho)^\dagger \hat{f}_{i_3}(\rho)    
\end{align}
where $\hat{r}_{(i_1, i_2)}(g_{1}, \rho) = \sum_{\tilde{g} \in \mathbb{S}_n} \frac{f_{i_1}(\tilde{g})f_{i_2}(\tilde{g}g_1)}{|\mathbb{S}_n|} \rho(\tilde{g})$.
This formulation introduces terms $f_i(\tilde{g})f_j(\tilde{g}g_1)$,
for $i \neq j$, which capture correlations between orbits and enhance the spectrum's ability to distinguish between graphs. 
Our intuition comes from a quantum theory perspective, where these terms can be viewed as interference terms. 
In quantum theory interference terms capture superposition-related phenomena where the whole can be greater than sum of the parts. 
Similarly, the Multi-Orbit Skew Spectrum is more expressive than the concatenation of individual spectra.

We formally prove the invariance of the Multi-Orbit Skew Spectrum in Section~\ref{sec: higher correlations}, where it emerges as a special case of the broader framework we propose. 
Its reduced version is akin to the one of the original Skew Spectrum: each term of the direct sum in (\ref{eq: multi orbit interactions}) is computed as a reduced Skew Spectrum (Def.~\ref{def: reduced skew spectrum}).
Its computation is detailed in Section~\ref{sec: computation and complexity} and has the following complexity.
\begin{theorem}[Reduced Multi-Orbit Skew Spectrum]
    \label{theorem: Multi-Orbit Skew Spectrum}
    Let $f: \mathbb{S}_n/\mathbb{S}_{n-2} \rightarrow \C^d$. Computing its reduced Multi-Orbit Skew Spectrum takes $O(d^2n^3 + d^3n^2)$ operations. The resulting embedding has up to $49 d^3$ non-zero entries.
\end{theorem}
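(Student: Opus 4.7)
The plan is to reduce the theorem to the single-orbit case (Theorem~\ref{theorem: reduced skew spec}) via the block-diagonal decomposition~(\ref{eq: multi orbit interactions}) and then bound the complexity of each type of subcomputation separately. The key observation is that expanding the direct sum in $f = \bigoplus_{i} f_i$ and distributing the tensor products turns $\skewsp_f(g_1, \rho)$ into a direct sum over triples $(i_1, i_2, i_3) \in [d]^3$ of terms of the form $\hat{r}_{(i_1,i_2)}(g_1, \rho)^\dagger\, \hat{f}_{i_3}(\rho)$, each of which has exactly the same algebraic shape as a single reduced Skew Spectrum entry.

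\textbf{Step 1 (Entry count).} Because the blocks in~(\ref{eq: multi orbit interactions}) are placed along a direct sum, no cancellation between blocks is possible, so counting non-zeros reduces to counting them per block. For fixed $(i_1,i_2,i_3)$, the matrix $\hat{r}_{(i_1,i_2)}(g_1,\rho)^\dagger \hat{f}_{i_3}(\rho)$ inherits the sparsity pattern of Lemma~\ref{lemma: fourier sparsity}: the projection $\tau_{\mathbb{S}_{n-2}}$ forces $\hat{r}_{(i_1,i_2)}$ onto $\mathbb{S}_n/\mathbb{S}_{n-2}$, so both factors have the same column-sparsity pattern as in the single-orbit case. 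Hence each block contributes the same $49$ non-zero scalars tallied in Theorem~\ref{theorem: reduced skew spec}, and summing over the $d^3$ blocks yields at most $49\, d^3$ non-zero entries.

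\textbf{Step 2 (Complexity).} I would split the work into three stages.
(a) Precompute $\hat{f}_{i_3}(\rho)$ for every $i_3 \in [d]$ and every $\rho \in \Lambda_n$; using the specialized FFT on $\mathbb{S}_n/\mathbb{S}_{n-2}$ referenced after~(\ref{eq: fourier transform}), this costs $O(n^2)$ per function, totalling $O(dn^2)$ (absorbed by the other terms).
(b) Precompute $\hat{r}_{(i_1,i_2)}(g_1,\rho)$ for all $(i_1,i_2) \in [d]^2$, all seven $g_1 \in \mathbb{S}_{n-2}\backslash \mathbb{S}_n / \mathbb{S}_{n-2}$, and all $\rho \in \Lambda_n$. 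This is precisely the cross-correlation subroutine that dominates the $O(n^3)$ cost of the reduced single-orbit Skew Spectrum, so doing it $d^2$ times costs $O(d^2n^3)$.
(c) Form the $d^3$ products $\hat{r}_{(i_1,i_2)}^\dagger \hat{f}_{i_3}$ at each $(g_1,\rho)$. By Lemma~\ref{lemma: fourier sparsity}, $\hat{f}_{i_3}(\rho)$ has only a constant number of non-zero columns in a square matrix of side $O(n^2)$, so each multiplication costs $O(n^2)$; summing over the $d^3$ triples (and the constant number of $(g_1,\rho)$) gives $O(d^3n^2)$. Adding the three stages yields the claimed $O(d^2n^3 + d^3n^2)$.

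\textbf{Main obstacle.} The routine parts are the entry count and stages (a) and (c): these follow directly from Lemma~\ref{lemma: fourier sparsity} and Theorem~\ref{theorem: reduced skew spec}. The delicate step is stage (b), since $\hat{r}_{(i_1,i_2)}$ is a \emph{cross}-correlation between two \emph{different} orbit functions $f_{i_1}$ and $f_{i_2}$, not the autocorrelation used in Theorem~\ref{theorem: reduced skew spec}. I expect the bulk of the write-up to consist of verifying that the correlation algorithm of \citet{kondor2008skew} is bilinear in its two input functions and uses only the $\mathbb{S}_n/\mathbb{S}_{n-2}$ coset structure, so that replacing the second copy of $f$ by a distinct $f_{i_2}$ preserves both the $O(n^3)$ running time and the sparsity pattern. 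Once this bilinear generalization is in hand, the rest of the argument is essentially bookkeeping over the $d^3$ orbit triples, and the stated complexity and non-zero count follow immediately.
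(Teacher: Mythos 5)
Your proposal is correct and follows essentially the same route as the paper, which obtains this theorem as the $k=3$ case of its doubly-reduced $k$-Spectrum analysis: the same block decomposition into $d^3$ orbit triples, $O(d^2)$ correlation Fourier transforms at $O(n^3)$ each, $d^3$ sparse matrix products at $O(n^2)$ each, and $49$ non-zeros per block. The bilinearity issue you flag as the main obstacle is exactly what the paper's precomputation algorithm (Algorithm~\ref{alg: precomputation} and Lemma~\ref{lemma: precomputing s}) resolves by indexing the partial sums $P_{i}$ by orbit, so your plan closes as expected.
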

In the remainder of this section, we demonstrate how to construct Multi-Orbit functions for various graph structures.
These examples illustrate the spectrum's versatility in capturing complex data representations.

\paragraph{Self-loops.} Graphs with self-loops naturally split into two orbits: diagonal and off-diagonal entries, which do not mix under permutations.
Define $f_1 = A_{\sigma(n), \sigma(n-1)}$ for off-diagonal entries and $f_2 = A_{\sigma(n), \sigma(n)}$ for diagonal entries.

\paragraph{Node features.} If each node has a vector $x_i$ of $p$ features, the Multi-Orbit function $f$ spans $d=p+1$ orbits.
Specifically, $f_i(\sigma) = x_{i, \sigma(n)}$ for $i \in [p]$ capturing the features, and $f_d(\sigma) = A_{\sigma(n), \sigma(n-1)}$, encoding the adjacency.

\paragraph{Edge features.} When each edge has up to $d$ features, the graph can be represented as a tensor $A \in \mathbb{C}^{d \times n \times n}$, where $A_{l,i,j}$ encodes the $l^{\mathrm{th}}$ feature of the edge between nodes $i$ and $j$. The function $f_i(\sigma) = A_{i,\sigma(n), \sigma(n-1)}$ captures these features, with the number of orbits scaling with the features.

\paragraph{Multilayer graphs.} 
Graphs with multiple edge layers, each representing distinct relationships (e.g., friendship or work connections in social networks), can be represented using separate adjacency matrices.
For layer $i$, define $f_i(\sigma) = A_{i,\sigma(n), \sigma(n-1)}$, where $i$ indexes the layers.

\paragraph{Hypergraphs.} In hypergraphs, edges connect multiple nodes. 
A simple representation uses $A \in \mathbb{C}^{d \times n}$, with rows corresponds to edges and $f_i(\sigma) = A_{i,\sigma(n)}$ specifies node-edge relationships.
Although $f$ is defined over $\mathbb{S}_n/\mathbb{S}_{n-1}$, potentially reducing the computation, the number of orbits matches the number of edges, which can be large. 

Alternatively, for hypergraphs with edges connecting up to $q$ nodes, an adjacency tensor $A_{i_1, \dots, i_q}$ can encode whether an edge exists between nodes $i_1, \dots, i_q$. 
A fictitious node $n+1$ handles edges with fewer than $q$ nodes.
This defines $f(\sigma) = A_{\sigma(n), \dots, \sigma(n-(q-1))}$ on $\mathbb{S}_n/\mathbb{S}_{n-q}$.
While beyond the scope of this paper, \citet{clausen2017linear} provide efficient Fourier transforms on $\mathbb{S}_n/\mathbb{S}_{n-q}$, which could be used to extend our invariant.


\section{Spectra of higher correlations}
\label{sec: higher correlations}

The Skew Spectrum and our Multi-Orbit generalization are Fourier transforms over the last function entry of a triple correlation $\mathcal{S}_f^{(3)}(\sigma_1, \sigma_2) = \sum_{\tilde{\sigma} \in \mathbb{S}_n}\frac{f(\tilde{\sigma})f(\tilde{\sigma}\sigma_1)f(\tilde{\sigma}\sigma_2)}{|\mathbb{S}_n|}.$

In signal analysis, it is well-known that correlation functions constitute translation invariants, because summing over the entire domain effectively averages out translations.
Our idea for the second generalization is to compute higher correlations, aiming to increase the completeness of the invariant at the expense of higher computation complexity.
Formally, we define a (Multi-Orbit) $k$-correlation as follows.
\begin{definition}[$k$-correlation]
\label{def: k-corr}
    Let $f:\mathbb{S}_n \rightarrow \C^d$. An entry of the $k$-correlation function is defined as
    \begin{align}
        \label{eq: k-corr}
        \mathcal{S}^{(k)}_f(G_{k-1}) = \frac{1}{{|\mathbb{S}_n|}}\sum_{\tilde{g} \in \mathbb{S}_n} {f(\tilde{g}) \bigotimes_{l=1}^{k-1} f(\tilde{g}g_l)}
    \end{align}
    where $G_{k-1} = (g_1, \dots, g_{k-1}) \in (\mathbb{S}_n)^{k-1}$.
\end{definition}

Our family of invariants - the $k$-Spectrum entries - is given by Fourier transforms over the last entry of a $k$-correlation function. 
This establishes a bijection with the correlation functions, while helping the computation.

\begin{definition}[$k$-Spectrum]
\label{def: k-spectrum}
    Let $f:\mathbb{S}_n \rightarrow \C^d$. An entry of the $k$-Spectrum is defined as
    \begin{align}
    \label{eq: Multi-Orbit k-spectrum}
        \hat{\mathcal{S}}^{(k)}_f(G_{k-2}, \rho) = \bigoplus_{I_k}\hat{r}^{(k)}_{I_{k-1}}(G_{k-2}, \rho)^\dagger\hat{f}_{i_k}(\rho).
    \end{align}
    Here $I_k = (i_1, \dots, i_k) \in [d]^k$ is a list of indices, and
    \begin{align}\hat{r}^{(k)}_{I_{k-1}}(G_{k-2}, \rho)= \sum_{\tilde{g} \in \mathbb{S}_n} \frac{f_{i_{k-1}}(\tilde{g})\prod_{l=1}^{k-2} f_{i_l}(\tilde{g}g_l)}{{|\mathbb{S}_n|}} \rho(\tilde{g}).
    \end{align}
\end{definition}
The invariance of the $k$-Spectrum follows directly from the one of the $k$-correlation functions, holding for both the Skew Spectrum ($k=3$) and our Multi-Orbit extension.

\begin{theorem}[Invariance]
    Let $f_1, f_2: \mathbb{S}_n \rightarrow \C^d$ be equivalent to up to left-translation, such that $f_1(\sigma) = f_2(\overline{\sigma}\sigma)$ for some fixed $\overline{\sigma} \in \mathbb{S}_n$ and every $\sigma \in \mathbb{S}_n$. Then, the Multi-Orbit $k$-Spectra of $f_1$ and $f_2$ are identical. In other words, for each $G_{k-2} \in (\mathbb{S}_n)^{k-2}$, $\rho \in \mathrm{Irr}(\mathbb{S}_n)$, we have $\hat{\mathcal{S}}^{(k)}_{f_1}(G_{k-2}, \rho) = \hat{\mathcal{S}}^{(k)}_{f_2}(G_{k-2}, \rho)$.
\end{theorem}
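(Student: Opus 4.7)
The plan is to show that both Fourier-side building blocks that define $\hat{\mathcal S}^{(k)}_f$, namely $\hat f_i(\rho)$ and $\hat r^{(k)}_{I_{k-1}}(G_{k-2},\rho)$, pick up a uniform left-multiplication by $\rho(\bar\sigma)^\dagger$ under the substitution $f_1(\sigma)=f_2(\bar\sigma\sigma)$, and then to observe that these two factors cancel in the product $\hat r^{(k)}_{I_{k-1}}(G_{k-2},\rho)^\dagger\hat f_{i_k}(\rho)$ by unitarity of $\rho$. Because the direct sum over $I_k\in[d]^k$ preserves this cancellation termwise, the theorem will follow.

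The engine of the argument is a single change of variable $h=\bar\sigma\tilde g$ inside each Fourier sum. Applied to $\hat f_i$, it rewrites $\sum_g f^{(2)}_i(\bar\sigma g)\rho(g)$ as $\sum_h f^{(2)}_i(h)\rho(\bar\sigma^{-1}h)$, and using the homomorphism property of $\rho$ together with the fact that YORs are orthogonal ($\rho(\bar\sigma^{-1})=\rho(\bar\sigma)^\dagger$) factors the shift out as $\rho(\bar\sigma)^\dagger\hat f^{(2)}_i(\rho)$. The same substitution inside $\hat r^{(k)}_{I_{k-1}}$ sends the base point $\tilde g\mapsto h$ and every shifted argument $\tilde g g_l\mapsto h g_l$ uniformly, so every $f^{(1)}_{i_l}$ becomes $f^{(2)}_{i_l}$ while only the single factor $\rho(\tilde g)$ yields a $\rho(\bar\sigma)^\dagger$, giving $\hat r^{(k,1)}_{I_{k-1}}(G_{k-2},\rho)=\rho(\bar\sigma)^\dagger\hat r^{(k,2)}_{I_{k-1}}(G_{k-2},\rho)$. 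Plugging both relations into Def.~\ref{def: k-spectrum}, each summand of $\hat{\mathcal S}^{(k)}_{f_1}(G_{k-2},\rho)$ becomes
\begin{equation*}
\bigl(\rho(\bar\sigma)^\dagger\hat r^{(k,2)}_{I_{k-1}}\bigr)^\dagger\bigl(\rho(\bar\sigma)^\dagger\hat f^{(2)}_{i_k}\bigr)=\hat r^{(k,2)\dagger}_{I_{k-1}}\,\rho(\bar\sigma)\rho(\bar\sigma)^\dagger\,\hat f^{(2)}_{i_k}=\hat r^{(k,2)\dagger}_{I_{k-1}}\,\hat f^{(2)}_{i_k},
\end{equation*}
which is exactly the $I_k$-th summand of $\hat{\mathcal S}^{(k)}_{f_2}(G_{k-2},\rho)$.

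I do not anticipate a serious obstacle; the delicate point is merely verifying that the substitution acts uniformly on all $k-1$ arguments of $\hat r^{(k)}_{I_{k-1}}$, which is precisely where it matters that $\bar\sigma$ sits on the far left of $\tilde g g_l$ and does not interact with the right-multipliers $g_l$. A cleaner alternative is to first prove left-translation invariance of the $k$-correlation $\mathcal S^{(k)}_f$ in Def.~\ref{def: k-corr} (one sum, one change of variable) and then deduce invariance of $\hat{\mathcal S}^{(k)}_f$ by exhibiting it as a fixed linear functional of the correlation; I would still prefer the direct route above because it avoids re-deriving the two-sum factorization implicit in Def.~\ref{def: k-spectrum}.
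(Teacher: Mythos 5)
Your proof is correct, but it takes a genuinely different route from the paper's. The paper proves left-translation invariance of the $k$-\emph{correlation} $\mathcal{S}^{(k)}_f(G_{k-1})$ via a single change of variable $g'=\overline{\sigma}\tilde{g}$ in Def.~\ref{def: k-corr}, and then invokes the asserted bijection between $k$-correlations and $k$-Spectra to transfer the invariance to $\hat{\mathcal{S}}^{(k)}_f$ --- exactly the ``cleaner alternative'' you mention and set aside at the end. You instead work directly in the Fourier domain: you show that both building blocks of Def.~\ref{def: k-spectrum} transform covariantly, $\hat{f}^{(1)}_{i_k}(\rho)=\rho(\overline{\sigma})^\dagger\hat{f}^{(2)}_{i_k}(\rho)$ and $\hat{r}^{(k,1)}_{I_{k-1}}(G_{k-2},\rho)=\rho(\overline{\sigma})^\dagger\hat{r}^{(k,2)}_{I_{k-1}}(G_{k-2},\rho)$, and that the two factors of $\rho(\overline{\sigma})$ cancel termwise in $\hat{r}^\dagger\hat{f}$ by unitarity. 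Both arguments share the same engine (a left substitution that does not interact with the right-multipliers $g_l$), but yours is more self-contained: it proves the statement as literally written for $\hat{\mathcal{S}}^{(k)}$ without relying on the bijection claim, which the paper asserts but does not spell out (Def.~\ref{def: k-spectrum} defines $\hat{\mathcal{S}}^{(k)}$ directly rather than as a transform of $\mathcal{S}^{(k)}$, so strictly one would still need to check that $\hat{\mathcal{S}}^{(k)}$ is a function of $\mathcal{S}^{(k)}$ alone). What the paper's route buys is brevity --- one sum, one substitution --- and a statement (invariance of the correlations themselves) that is reused elsewhere, e.g.\ in the element-distinctness argument.
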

\begin{proof}
    Because of the bijection, we show the translation invariance of $k$-correlations.
    By substituting $g'=\overline{\sigma}\tilde{g}$, we obtain $\mathcal{S}^{(k)}_{f_1}(G_{k-1}) = \sum_{\tilde{g} \in \mathbb{S}_n} \frac{f_2(\overline{\sigma}\tilde{g}) \bigotimes_{l=1}^{k-1} f_2(\overline{\sigma}\tilde{g}g_l)}{{|\mathbb{S}_n|}} = \sum_{g' \in \mathbb{S}_n} \frac{f_2(g') \bigotimes_{l=1}^{k-1} f_2(g'g_l)}{{|\mathbb{S}_n|}} = \mathcal{S}^{(k)}_{f_2}(G_{k-1})$.
\end{proof}

\paragraph{Intuition.}
There are reasons to believe that computing higher correlations will increase the expressivity of the invariant. 
First, $k$-correlations are group-invariant polynomials of $f$ of order $k$. 
The Stone-Weierstrass theorem \cite{stone1948generalized} tell us that polynomials are dense in continuous functions. 
Thus one can hope that the set of all $k$-correlations for arbitrary high $k$ will span a dense subset of the space of all invariants.
In other words, one can hope that if a complete invariant exist, then it can be arbitrary well approximated by the $k$-correlations.
Indeed, \citet{roy1962note} observed that correlation functions for $k = 1, 2, \dots$ form a complete translation invariant for real-valued, Haar-integrable functions on locally compact abelian groups.
While \cite{chazan1970higher} showed that in that general case the invariants are complete only for an infinite $k$, \citet{kakarala1992triple} references and shows settings for which 3-correlations are already complete.
We conjecture that, for any finite $n$, there exists a finite $k_{\max}$ for which the collection of $k$-correlations, for all $k < k_{\max}$, forms a complete translation invariant for functions on $\mathbb{S}_n/\mathbb{S}_{n-2}$.
In Section~\ref{sec: sparsity double reduction}, we provide arguments to believe that $k_{\max} \in O(n^2)$.

Another confirmation arises from the study of higher correlations on undirected, unweighted graphs, in which case $A_{ij} \in \{0, 1\}$ and $A_{ii} = 0$.
Here, higher correlation entries count increasingly larger graph substructures.
For instance, $\mathcal{S}^{(1)}_f = \sum_{\tilde{g} \in \mathbb{S}_n} \frac{A_{\tilde{g}(n), \tilde{g}(n-1)}}{|\mathbb{S}_n|}$ counts the number of edges in the graph. 
Going on with similar calculations, $\mathcal{S}^{(2)}_f(g_1)$ recovers the results of $\mathcal{S}^{(1)}_f$ for $g_1 = ()$ and $g_2 = (n-1,n)$, counts the number of pairs of edges that share a common vertex for $g \in \{(n-2, n-1), (n-2,n), (n-2, n-1, n), (n-2, n, n-1)\}$, and counts the number of pairs of edges that do not share a vertex for $g = (n-3,n-1)(n-2, n)$.
Both the first and second correlations are sufficient to distinguish all graphs with up to three vertices.
Finally, one can deduce that $\mathcal{S}^{(3)}_f((n-2, n-1), (n-3, n-1))$ is proportional to the number of triples of edges that all share one vertex, while $\mathcal{S}^{(3)}_f((n-2, n), (n-2, n-1))$ is proportional to the number of triples of edges that form a triangle, allowing the invariant to distinguish graphs with more nodes.

\subsection{Sparsity and double reduction}
\label{sec: sparsity double reduction}
The increased expressivity of higher correlations comes with higher computational complexity.
Indeed, $(\ref{eq: Multi-Orbit k-spectrum})$ needs to be evaluated on $k-2$ group elements, each one taking up to $n!$ values.
Here we prove a result that reduces the number of $k$-Spectrum entries to compute.
\begin{theorem}
    Let $f: \mathbb{S}_n/\mathbb{S}_{n-2} \rightarrow \C^d$. The Multi-Orbit $k$-Spectrum of $f$ is uniquely determined by its subset of components $\{\hat{\mathcal{S}}^{(k)}_f(G_{k-2}, \rho) |  \rho \in \Lambda_n, G_{k-2} \in (\mathbb{S}_n/\mathbb{S}_{n-2})^{k-2}\}$.
\end{theorem}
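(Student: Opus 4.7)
The plan is to handle the two constraints separately: (a) restricting the irrep index to $\rho\in\Lambda_n$, and (b) restricting each entry $g_l$ of $G_{k-2}$ to the coset transversal $\mathbb{S}_n/\mathbb{S}_{n-2}$. Both reductions will follow from what the excerpt has already established, namely the Fourier sparsity of functions supported on $\mathbb{S}_n/\mathbb{S}_{n-2}$ and the right $\mathbb{S}_{n-2}$-invariance of each component $f_i$.

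For (a), I would invoke Lemma~\ref{lemma: fourier sparsity}. Since every component function $f_{i_k}:\mathbb{S}_n/\mathbb{S}_{n-2}\to\C$ has Fourier support contained in $\Lambda_n$, we get $\hat{f}_{i_k}(\rho)=0$ for each $\rho\notin\Lambda_n$ and each $i_k\in[d]$. The defining expression
\begin{align*}
\hat{\mathcal{S}}^{(k)}_f(G_{k-2},\rho) \;=\; \bigoplus_{I_k}\hat{r}^{(k)}_{I_{k-1}}(G_{k-2},\rho)^\dagger\,\hat{f}_{i_k}(\rho)
\end{align*}
carries $\hat{f}_{i_k}(\rho)$ as a right factor in every direct summand, so the whole entry vanishes identically for such $\rho$. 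Only irreps in $\Lambda_n$ can possibly carry information.

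For (b), I would exploit the right $\mathbb{S}_{n-2}$-invariance of each $f_i$: because $f_i$ is defined on the right coset space $\mathbb{S}_n/\mathbb{S}_{n-2}$, we have $f_i(\sigma h)=f_i(\sigma)$ for all $\sigma\in\mathbb{S}_n$ and $h\in\mathbb{S}_{n-2}$. For an arbitrary tuple $G_{k-2}=(g_1,\ldots,g_{k-2})$, decompose each $g_l=y_lh_l$ uniquely with $y_l\in\mathbb{S}_n/\mathbb{S}_{n-2}$ and $h_l\in\mathbb{S}_{n-2}$. Then
\begin{align*}
f_{i_l}(\tilde{g}\,g_l) \;=\; f_{i_l}\bigl((\tilde{g}y_l)h_l\bigr) \;=\; f_{i_l}(\tilde{g}y_l),
\end{align*}
so $\hat{r}^{(k)}_{I_{k-1}}(G_{k-2},\rho)$, and hence $\hat{\mathcal{S}}^{(k)}_f(G_{k-2},\rho)$, is unchanged when every $g_l$ is replaced by its transversal representative $y_l$. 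Combining with (a) yields the theorem: any entry outside the stated subset is either zero (when $\rho\notin\Lambda_n$) or equal to a subset entry (when some $g_l$ lies outside the transversal).

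The main obstacle, such as it is, is purely notational: one must be careful with the left-vs-right coset conventions of the paper so that ``$\mathbb{S}_n/\mathbb{S}_{n-2}$'' is consistently parsed as the right cosets $y\mathbb{S}_{n-2}$ that match the right-invariance of $f$. Beyond that there is no real technical difficulty, since both reductions reduce to immediate consequences of Lemma~\ref{lemma: fourier sparsity} and of $f$ being supported on $\mathbb{S}_n/\mathbb{S}_{n-2}$.
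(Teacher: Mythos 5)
Your proposal is correct and follows essentially the same route as the paper: the restriction on $G_{k-2}$ comes from the right $\mathbb{S}_{n-2}$-invariance $f_{i_l}(\tilde{g}g_lh_l)=f_{i_l}(\tilde{g}g_l)$, and the restriction on $\rho$ comes from Lemma~\ref{lemma: fourier sparsity} forcing $\hat{f}_{i_k}(\rho)=0$ outside $\Lambda_n$. Your write-up is just a more explicit version (with the coset decomposition $g_l=y_lh_l$ spelled out) of the paper's two-line argument.
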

\begin{proof}
    Observe Def.~\ref{def: k-spectrum}. 
    To restrict $G_{k-2}$, it suffices to 
    see that for any $H_{k-2} \in (\mathbb{S}_{n-2})^{k-2}$, $\prod_{l=1}^{k-2} f_{i_l}(\tilde{g}g_lh_l) = \prod_{l=1}^{k-2} f_{i_l}(\tilde{g}g_l)$.
    To restrict $\rho$, use Lemma~\ref{lemma: fourier sparsity}.
\end{proof}
While the Skew Spectrum and its Multi-Orbit generalization only need to be evaluated on $7$ group elements ($\mathbb{S}_{n-2} \backslash \mathbb{S}_n / \mathbb{S}_{n-2}$) and $4$ irreps ($\Lambda_n$), the $k$-Spectrum requires $O(n^{2(k-2)})$ group entries and $4$ irreps.

To reduce the computational complexity, while maintaining the generalization, we employ two heuristics: the reduction of \citet{kondor2008skew}, and a double reduction that further limits the entries to evaluate.

\paragraph{Reduction.}
Analogously to the approach of Theorem~\ref{theorem: reduced skew spec}, we define the reduced $k$-Spectrum as
\begin{align}
    \label{eq: reduced k-spectrum}
    \hat{\mathcal{S}}^{(k)}_f(G_{k-2}, \rho) = \bigoplus_{I_k}\hat{s}^{(k)}_{I_{k-1}}(G_{k-2}, \rho)^\dagger \hat{f}_{i_k}(\rho)
\end{align}
where $\hat{s}^{(k)}_{I_{k-1}}(G_{k-2}, \rho) = \hat{r}^{(k)}_{I_{k-1}}(G_{k-2}, \rho) \tau_{\mathbb{S}_{n-2}}(\rho)$
and $\tau_{\mathbb{S}_{n-2}}(\rho)$ is the projection of $\rho$ onto the trivial irrep of $\mathbb{S}_{n-2}$.

$\tau_{\mathbb{S}_{n-2}}$ makes the rows of $\hat{s}^{(k)}_{I_{k-1}}(G_{k-2}, \rho)^\dagger$ match the sparsity patterns of the Fourier transform columns (Lemma~\ref{lemma: fourier sparsity} and Appendix~\ref{apx: projection on trivial}), making $\hat{s}^{(k)}$ easier to compute and restricting the non-zero elements of each the direct sum term to $49$.
 
\paragraph{Double reduction.} 
The Doubly-reduced $k$-Spectrum is given by the entries of the reduced $k$-Spectrum evaluated on $G_{k-2}$ taking values in $(\mathbb{S}_{n-2} \backslash \mathbb{S}_n / \mathbb{S}_{n-2})^{k-2}$, without repetitions or order.
This heuristic reduces the number of possible entries from $n^{2(k-2)}$ to $\binom{7}{k-2}$. 
This effectively restricts the possible values of $k$ to a maximum of $9$, with the spectrum reaching its peak number of entries in the range $[4,6]$.
While this heuristics heavily sacrifices expressivity, we will show that it still adds completeness to the reduced Skew Spectrum, without increasing its computational complexity.

The reasons that motivate these heuristics are two.
First, the Skew Spectrum and the Multi-Orbit version only need to be evaluated on the $7$ double cosets representatives $\mathbb{S}_{n-2} \backslash  \mathbb{S}_n / \mathbb{S}_{n-2}$.
We extend this to $k$-Spectra by analogy.

Second, we note that repeated elements do not increase expressivity when representing unweighted graphs, and that the order of the elements in $G_{k-2}$ does not matter.
\begin{theorem}[Element distinctness]
    Let $f: \mathbb{S}_{n} \rightarrow \{0,1\}$ and let $G_{k-2}$ the list obtained by adding one element to $G_{k-3}$.
    Then, $\hat{\mathcal{S}}^{(k)}_f(G_{k-2}, \rho)$ adds information over $\hat{\mathcal{S}}^{(k-1)}_f(G_{k-3}, \rho)$ only if the added element in $G_{k-2}$ is not already in $G_{k-3}$.
\end{theorem}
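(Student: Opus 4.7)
The plan is to exploit the fact that $f$ takes values in $\{0,1\}$, so $f(x)^2 = f(x)$ for every $x \in \mathbb{S}_n$. Since the statement restricts to the single-orbit setting ($d=1$), the orbit indices in Def.~\ref{def: k-spectrum} are trivial and $\hat{\mathcal{S}}^{(k)}_f(G_{k-2}, \rho)$ is fully determined by
\begin{equation*}
\hat{r}^{(k)}(G_{k-2}, \rho) = \sum_{\tilde{g} \in \mathbb{S}_n} \frac{f(\tilde{g}) \prod_{l=1}^{k-2} f(\tilde{g}g_l)}{|\mathbb{S}_n|}\, \rho(\tilde{g}).
\end{equation*}
So it suffices to show that if the newly added element $g_{k-2}$ already appears in $G_{k-3}$, then $\hat{r}^{(k)}(G_{k-2}, \rho) = \hat{r}^{(k-1)}(G_{k-3}, \rho)$, from which the conclusion on $\hat{\mathcal{S}}^{(k)}_f$ follows immediately by substitution into Def.~\ref{def: k-spectrum}.

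The main calculation is a one-line algebraic collapse. Assume $g_{k-2} = g_m$ for some $m \in \{1, \dots, k-3\}$. Then the product inside the sum contains both $f(\tilde{g}g_m)$ and $f(\tilde{g}g_{k-2}) = f(\tilde{g}g_m)$, whose combined contribution is $f(\tilde{g}g_m)^2 = f(\tilde{g}g_m)$ by binary-valuedness. Dropping the redundant factor yields exactly the product defining $\hat{r}^{(k-1)}(G_{k-3}, \rho)$, and hence the identity $\hat{\mathcal{S}}^{(k)}_f(G_{k-2}, \rho) = \hat{\mathcal{S}}^{(k-1)}_f(G_{k-3}, \rho)$ follows. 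The new entry therefore carries no information not already present in the $(k-1)$-Spectrum.

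I do not anticipate any substantive obstacle: the argument rests entirely on idempotency of $\{0,1\}$-valued functions together with a trivial reindexing of the product. The one caveat worth flagging in the write-up is that the same collapse applies when the added element equals the identity (reading the leading $f(\tilde{g}) = f(\tilde{g}\cdot e)$ as corresponding to an implicit $g_0 = e$), and that a multi-orbit analogue would require the duplicated pair to share the same orbit index $i_l = i_{k-2}$, since $f_i \cdot f_j$ need not simplify when $i \neq j$.
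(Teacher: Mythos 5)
Your proposal is correct and rests on exactly the same idea as the paper's proof: the idempotency $f(x)^2 = f(x)$ of a $\{0,1\}$-valued function collapses the duplicated factor, so the $k$-th object reduces to the $(k-1)$-th. The only cosmetic difference is that you perform the collapse directly on $\hat{r}^{(k)}$ (the Fourier side), whereas the paper performs it on the $k$-correlation and then invokes the Fourier bijection; these are interchangeable, and your added remarks on the identity element and the multi-orbit caveat are sensible but not load-bearing.
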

\begin{proof}
    Because of the Fourier bijection, we consider $k$-correlations (see Def.~\ref{def: k-corr}).
    Define $\overline{G}_{k-1}$ by taking $G_{k-2}$ and adding one element $\sigma$, and then $\overline{G}_{k-2}$ by taking $G_{k-3}$ and adding the same $\sigma$.
    Using $f(g)^2 = f(g)$ and (\ref{eq: k-corr}) one can verify 
    $\mathcal{S}^{(k)}_f(\overline{G}_{k-1}) = \mathcal{S}^{(k-1)}_f(\overline{G}_{k-2})$, implying that $\hat{\mathcal{S}}^{(k)}_f(G_{k-2}, \rho) = \hat{\mathcal{S}}^{(k-1)}_f(G_{k-3}, \rho)$. 
\end{proof}

\begin{theorem}[Element ordering]
    Let $f: \mathbb{S}_{n} \rightarrow \C$. $\hat{\mathcal{S}}^{(k)}_f(G_{k-2}, \rho)$ adds information over $\hat{\mathcal{S}}^{(k)}_f(G'_{k-2}, \rho)$ only if $G_{k-2}$ and $G'_{k-2}$ contain distinct sets of elements.
\end{theorem}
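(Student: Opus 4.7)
My plan is to mirror the proof of the preceding Element distinctness theorem. First, I would invoke the Fourier bijection between the $k$-Spectrum (Def.~\ref{def: k-spectrum}) and the $k$-correlation (Def.~\ref{def: k-corr}), so that it suffices to show that $\mathcal{S}^{(k)}_f(G_{k-1})$ is symmetric in its entries $g_1, \dots, g_{k-1}$; the analogous symmetry in $G_{k-2}$ then transfers to $\hat{\mathcal{S}}^{(k)}_f(G_{k-2}, \rho)$ because the Fourier transform is taken with respect to the remaining function argument, not with respect to the $g_l$'s.

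Next, I would use the scalar hypothesis $f : \mathbb{S}_n \to \C$ to collapse the tensor structure in
\begin{align*}
    \mathcal{S}^{(k)}_f(G_{k-1}) = \frac{1}{|\mathbb{S}_n|} \sum_{\tilde{g} \in \mathbb{S}_n} f(\tilde{g}) \bigotimes_{l=1}^{k-1} f(\tilde{g} g_l).
\end{align*}
Since tensor products of complex scalars coincide with ordinary multiplication in $\C$, the summand reduces to $f(\tilde{g}) \prod_{l=1}^{k-1} f(\tilde{g} g_l)$. Commutativity of multiplication in $\C$ makes this product symmetric in the index $l$, so any permutation of $(g_1, \dots, g_{k-1})$ leaves the summand invariant for every $\tilde{g}$, and hence leaves $\mathcal{S}^{(k)}_f(G_{k-1})$ invariant.

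Transferring this symmetry through the Fourier bijection gives $\hat{\mathcal{S}}^{(k)}_f(G_{k-2}, \rho) = \hat{\mathcal{S}}^{(k)}_f(G'_{k-2}, \rho)$ whenever $G'_{k-2}$ is a reordering of $G_{k-2}$. Combined with the preceding Element distinctness theorem, which removes repeated entries in the unweighted setting, this shows that additional information can appear only when the two tuples differ as underlying sets of elements, establishing the claim.

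There is no serious obstacle beyond recognising that scalar commutativity does all of the work. It is worth flagging, however, that this is precisely the step that would break in the Multi-Orbit setting $f : \mathbb{S}_n \to \C^d$ with $d > 1$: tensor products of vectors are noncommutative, so the symmetry argument fails, and any extension of the doubly-reduced indexing to Multi-Orbit $k$-Spectra would require a separate argument (for instance, by carefully tracking the permutation of the direct-sum factors induced by reordering $G_{k-2}$).
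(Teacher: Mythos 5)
Your proof is correct and takes essentially the same approach as the paper: the entire argument rests on the commutativity of the scalar factors $f(\tilde{g}g_l)$, which the paper applies directly inside $\hat{r}^{(k)}(G_{k-2}, \rho)$ rather than detouring through the $k$-correlation and the Fourier bijection as you do. Your closing observation that the argument breaks for Multi-Orbit functions $f:\mathbb{S}_n \rightarrow \C^d$ with $d>1$ (where reordering $G_{k-2}$ instead permutes the direct-sum factors indexed by $I_k$) is accurate and worth noting, though not needed for the scalar statement.
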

\begin{proof}
    Consider the $k$-Spectrum (Def.~\ref{def: k-spectrum}).
    Let $G'_{k-2}$ contain permuted elements of $G_{k-2}$. 
    Since the terms $f(gg_l)$ commute, we have $\hat{r}^{(k)}(G_{k-2}, \rho) = \hat{r}^{(k)}(G'_{k-2}, \rho)$.
\end{proof}


\section{Relationship to WL hierarchy and GNNs}
\label{sec: relationship to wl and gnn}
Understanding the expressive power of Graph Neural Networks has become a central theme in graph learning research.
A key result in this area establishes that message-passing neural networks - a broad class of GNNs - are at most as expressive as the 1-dimensional Weisfeiler-Lehman (1-WL) graph isomorphism test~\cite{xu2018powerful, morris2019weisfeiler}. 
Recent work has extended this line of analysis to alternative architectures, including those based on relational pooling and spectral features, relating them to the broader Weisfeiler-Lehman hierarchy~\cite{zhou2023relational, zhang2024expressive}.

Within this landscape, exploring the relationship between our proposed $k$-Spectra (or equivalently, $k$-correlations) framework and the WL hierarchy offers a promising avenue for both theoretical insight and practical application. 
Understanding this connection can shed light on the representational capabilities of $k$-Spectra and suggests ways it may enhance the expressivity of graph learning models such as GNNs.

Theoretically, $k$-WL and $k$-correlation spectra capture distinct structural properties: while $k$-WL iteratively refines node representations based on patterns over $k$-tuples of node neighborhoods, $k$-correlations count edge-level substructures (see Intuition in the previous section). 
This distinction positions $k$-Spectra as a complementary framework for evaluating and augmenting expressivity in graph representations.

As an illustrative example, \citet[Corollary 3.14]{gaihomomorphism} show that spectral-invariant GNNs can count paths and cycles of up to 7 vertices.
In contrast, non-reduced $k$-Spectra - with sufficiently large $k$ - can capture longer paths and larger cycles, suggesting a path for practical advantage. 

\begin{algorithm}[t]
   \caption{Doubly-Reduced k-Spectrum}
   \label{alg:main kspect}
\begin{algorithmic}
    \STATE {\bfseries Input:} Functions $f_i:\mathbb{S}_n/\mathbb{S}_{n-2} \rightarrow \mathbb{C}$, for $i \in [d]$
    
    \STATE Precompute $s^{(k)}$
    
    \FOR{$\rho \in \{(n), (n-1, 1), (n-2, 2), (n-2, 1, 1)\}$}
        \FOR{$i \in [d]$}
            \STATE Compute $\hat{f}_{i_k}(\rho)$
        \ENDFOR
        \FOR{$G_{k-2} \in \mathrm{comb}( \mathbb{S}_{n-2} \backslash \mathbb{S}_n / \mathbb{S}_{n-2}, k-2)$}
            \FOR{$I_{k-1} \in [d]^{k-1}$}
                \STATE Compute $\hat{s}^{(k)}_{I_{k-1}}(G_{k-2}, \rho)$
            \ENDFOR
            \STATE $\hat{\mathcal{S}}^{(k)}_{f}(G_{k-2}, \rho) = \bigoplus_{I_k \in [d]^k} \hat{s}^{(k)}_{I_{k-1}}(G_{k-2}, \rho)^\dagger\hat{f}_{i_k}(\rho)$
        \ENDFOR
    \ENDFOR
    \STATE Output $\hat{\mathcal{S}}^{(k)}_f$
\end{algorithmic}
\end{algorithm}

To empirically examine the relationship with WL tests, in Section~\ref{sec: exp higher correlations} we demonstrate that features derived from the doubly-reduced $k$-Spectra are neither strictly less expressive nor strictly more expressive than those derived from 1-WL. 
This supports the intuition that $k$-Spectra offer orthogonal structural insights not captured by WL-based approaches.

Beyond theoretical comparisons with WL tests, we highlight three potential integration approaches into GNN pipelines:
\begin{itemize}
\item \textbf{Post concatenation:} Precompute a $k$-Spectra whole-graph embedding and concatenate it with the output of the GNN embedding, just before the prediction layers.
\item \textbf{k-Spectra aggregation:} Incorporate $k$-Spectra subgraph invariants during the node aggregation step.
\item \textbf{k-Spectra pooling:} Replace standard pooling functions (e.g., summation, max) with $k$-Spectra-based invariants to aggregate node features.
\end{itemize}
Each of these strategies supports the incorporation of learnable parameters, such as orbit-specific weights in the individual terms of Eq.~\ref{eq: Multi-Orbit k-spectrum}, allowing the model to adaptively emphasize the most relevant graph information. We see these directions as promising opportunities for future work at the intersection of spectral methods and GNN design.

\begin{algorithm}[!t]
   \caption{Precomputing $s^{(k)}$}
   \label{alg: precomputation}
\begin{algorithmic}
    \STATE {\bfseries Input:} Functions $f_i:\mathbb{S}_n/\mathbb{S}_{n-2} \rightarrow \mathbb{C}$, for $i \in [d]$
    
    \FOR{$i \in [d]$}
        \FOR{$j \in [n-2]$}
            \STATE $F_i(j) = \sum_{l \in [n] \setminus \{j\}} f_i((n-1, j, n, l))$
            \FOR{$\sigma \in \mathbb{S}_n/\mathbb{S}_{n-2}$}
                \STATE $P_i(g_1, \sigma, j) = f_i(\sigma)$
                \STATE $P_i(g_2, \sigma, j) = f_i(\sigma\cdot(n-1, n))$
                \STATE $P_i(g_3, \sigma, j) = f_i(\sigma \cdot (n-1, j))$
                \STATE $P_i(g_4, \sigma, j) = f_i(\sigma\cdot(n, j))$
                \STATE $P_i(g_5, \sigma, j) = f_i(\sigma\cdot(j, n-1, n))$
                \STATE $P_i(g_6, \sigma, j) = f_i(\sigma \cdot (j, n, n-1))$
                \STATE $P_i(g_7, \sigma, j) = \dfrac{F_i(\sigma(j)) - \sum_{l \in \{3,6\}}P_i(g_l, \sigma, j)}{n-3}$
            \ENDFOR    
        \ENDFOR
    \ENDFOR
    \FOR{$I_{k-2} \in [d]^{k-2}$}
        \FOR{$G_{k-2} \in \mathrm{comb}( \mathbb{S}_{n-2} \backslash \mathbb{S}_n / \mathbb{S}_{n-2}, k-2)$}
            \FOR{$\sigma \in \mathbb{S}_n/\mathbb{S}_{n-2}$}
                \STATE $Q_{I_{k-2}}(G_{k-2}, \sigma) = \sum_{j = 1}^{n-2} \prod_{l=1}^{k-2} P_{i_l}(g_l, \sigma, j)$
                \FOR{$i \in [d]$}
                    \STATE $I_{k-1} = \mathrm{append}(I_{k-2}, i)$
                    \STATE $s^{(k)}_{I_{k-1}}(G_{k-2}, \sigma) = \frac{f_{i_{k-1}}(\sigma)}{n-2}Q_{I_{k-2}}(G_{k-2}, \sigma)$
                \ENDFOR
            \ENDFOR
        \ENDFOR
    \ENDFOR  
    \STATE Output $s^{(k)}$
\end{algorithmic}
\end{algorithm}

\section{Computation and complexity}
\label{sec: computation and complexity}
We proceed to detail the computation of the Doubly-Reduced $k$-Spectrum (\ref{eq: reduced k-spectrum}), which is the generalization of the Skew Spectrum with the lowest computational complexity. 

The main idea is summarized in Algorithm~\ref{alg:main kspect}. 
We can treat $\hat{s}^{(k)}$ as the Fourier transform of a function $s^{(k)} \in \mathbb{S}_n/\mathbb{S}_{n-2}$. 
\begin{align}
    s^{(k)}_{I_{k-1}}(G_{k-2}, \sigma) = \frac{f_{i_{k-1}}(\sigma)}{|\mathbb{S}_{n-2}|} \sum_{h \in \mathbb{S}_{n-2}} \prod_{l=1}^{k-2} f_{i_l}(\sigma h g_l)
\end{align}
After precomputing $s^{(k)}$, we can compute its transform and that of one single orbit function, for all the input. 
Then, multiplying them produces one entry of the Spectrum. 

Precomputing $s^{(k)}$ efficiently requires some creativity.
Algorithm~\ref{alg: precomputation} does so, using a divide-and-conquer strategy, coupled with dynamic programming.
The procedure breaks the computation of $s^{(k)}$ into the computation of partial sums that can be efficiently recombined when fixing the input parameters.
We summarize this intermediate result in the following Lemma, discussing its correctness in Appendix~\ref{apx: data structure}.
\begin{lemma}[Precomputing $s^{(k)}$]
\label{lemma: precomputing s}
    Algorithm~\ref{alg: precomputation} precomputes $s^{(k)}_{I_{k-1}}(G_{k-2}, \sigma)$ for all the choices of the input parameters $I_{k-1} \in [d]^{k-1}$, $G_{k-2} \in (\mathbb{S}_{n-2}\backslash \mathbb{S}_n / \mathbb{S}_{n-2})^{k-2}$, $\sigma \in \mathbb{S}_n/\mathbb{S}_{n-2}$.
    It requires $O(d^{k-2}n^3 + d^{k-1}n^2)$ operations and $O(dn^3+d^{k-1}n^2)$ space.
\end{lemma}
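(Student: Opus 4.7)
The plan is to establish the lemma in two parts: \emph{correctness} of Algorithm~\ref{alg: precomputation}, and its \emph{time and space bounds}. The correctness hinges on a single reparameterization of the inner sum in the definition $s^{(k)}_{I_{k-1}}(G_{k-2}, \sigma) = \frac{f_{i_{k-1}}(\sigma)}{|\mathbb{S}_{n-2}|} \sum_{h \in \mathbb{S}_{n-2}} \prod_{l=1}^{k-2} f_{i_l}(\sigma h g_l)$. The observation is that $f_i$ is constant on left cosets of $\mathbb{S}_{n-2}$, so $f_i(\sigma h g_l)$ depends on $h$ only through how the composition $h g_l$ moves the last two indices $n-1, n$. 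For the six ``easy'' representatives $g_1, \ldots, g_6$, this reduces to dependence on $j := h(n-2)$ alone, and a direct case-by-case check matches each $P_i(g_l, \sigma, j)$ defined by the first inner block of the algorithm to $f_i(\sigma h g_l)$ for any $h$ with $h(n-2) = j$. For $g_7 = (n-3, n-1)(n-2, n)$, the value also depends on $h(n-3)$, and the main algebraic identity to verify is
\begin{align*}
\sum_{\substack{h \in \mathbb{S}_{n-2}\\h(n-2)=j}} f_i(\sigma h g_7) = (n-4)!\bigl( F_i(\sigma(j)) - P_i(g_3, \sigma, j) - P_i(g_6, \sigma, j) \bigr),
\end{align*}
which I would prove by unfolding $F_i(\sigma(j))$ as a sum of $n-1$ adjacency entries indexed at $\sigma(j)$ and identifying the two subtracted $P_i$ values as the contributions at the forbidden positions $\sigma(n-1)$ and $\sigma(n)$ that must be excised since $h$ fixes them. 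Dividing by $n-3$ yields the averaged value assigned to $P_i(g_7, \sigma, j)$.

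Once the $P_i$ table is certified, I would assemble $s^{(k)}$ via the identity
\begin{align*}
\sum_{h \in \mathbb{S}_{n-2}} \prod_{l=1}^{k-2} f_{i_l}(\sigma h g_l) = (n-3)! \sum_{j=1}^{n-2} \prod_{l=1}^{k-2} P_{i_l}(g_l, \sigma, j),
\end{align*}
by a short case analysis on whether $g_7 \in G_{k-2}$. If $g_7$ is absent, every $h$ with fixed $h(n-2) = j$ contributes identically, giving the $(n-3)!$ multiplicity directly; if $g_7$ appears exactly once, the additional averaging over $h(n-3)$ baked into $P_i(g_7, \sigma, j)$ supplies the missing factor of $n-3$, bringing the count to $(n-4)!(n-3) = (n-3)!$. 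Two or more $g_7$'s never arise because Algorithm~\ref{alg: precomputation} iterates $G_{k-2}$ over \emph{combinations without repetition} of the seven representatives, which is precisely the regime covered by the doubly-reduced spectrum. Multiplying by $f_{i_{k-1}}(\sigma)/(n-2)$ restores the correct normalisation and reproduces $s^{(k)}_{I_{k-1}}(G_{k-2}, \sigma)$.

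For the complexity, I would bound each nested loop separately. The first block has $d(n-2)$ iterations of the outer pair $(i, j)$; computing $F_i(j)$ costs $O(n)$, and sweeping $\sigma \in \mathbb{S}_n/\mathbb{S}_{n-2}$ to populate the seven $P_i(g_l, \sigma, j)$ values costs $O(n^2)$, for $O(dn^3)$ time and $O(dn^3)$ space on the $P_i$ table (the $F_i$ table contributes only $O(dn)$). The second block runs over $d^{k-2}\binom{7}{k-2} = O(d^{k-2})$ configurations of $(I_{k-2}, G_{k-2})$ and $O(n^2)$ cosets $\sigma$; each $Q_{I_{k-2}}(G_{k-2}, \sigma)$ is a sum of $n-2$ products of $k-2 = O(1)$ precomputed factors, giving $O(d^{k-2} n^3)$ work in total. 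The final aggregation over the $d$ possible values of $i_{k-1}$ contributes $O(d^{k-1} n^2)$ further operations and output entries. Summing yields the claimed $O(d^{k-2} n^3 + d^{k-1} n^2)$ time and $O(dn^3 + d^{k-1} n^2)$ space. The only step I expect to require genuine care is the $g_7$ reduction, since expressing the $h(n-3)$ average through $F_i$ and two reusable $P_i$ values means carefully matching the cycle-notation action of $(n-1, j, n, l)$ on $(n-1, n)$ with adjacency entries; every other step is routine once that identity is in place.
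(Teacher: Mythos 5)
Your proposal is correct and follows essentially the same route as the paper's own proof: the same decomposition of the sum over $\mathbb{S}_{n-2}$ into $n-2$ blocks indexed by $j = h(n-2)$, the same case-by-case matching of the seven $P_i(g_l,\sigma,j)$ tables (including the $F_i$-based excision of the two forbidden positions $\sigma(n-1),\sigma(n)$ for $g_7$), and the same loop-by-loop complexity accounting. Your explicit case analysis on whether $g_7$ appears in $G_{k-2}$ — and the observation that iterating over combinations without repetition rules out two $g_7$ factors, whose product would not factor through the averaged $P_i(g_7,\sigma,j)$ — is a point the paper leaves implicit, but it is a refinement of the same argument rather than a different one.
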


Combining this lemma with the efficient computation of Fourier transforms on $\mathbb{S}_n/\mathbb{S}_{n-2}$, we arrive to the following.
\begin{theorem}[Doubly-reduced $k$-Spectrum]
\label{theorem: doubly-reduced spectrum}
    Let $f: \mathbb{S}_n/\mathbb{S}_{n-2} \rightarrow \C^d$. We can compute its doubly-reduced $k$-spectrum in $O(d^{k-1}n^3 + d^kn^2)$ operations. The doubly-reduced $k$-spectrum has up to $\binom{7}{k-2}7d^k$ non-zero entries.
\end{theorem}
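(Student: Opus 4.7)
The plan is to perform a careful bookkeeping over Algorithm~\ref{alg:main kspect}, whose cost decomposes naturally into four contributions: (i) the precomputation of $s^{(k)}$; (ii) the Fourier transforms of the $d$ single-orbit functions $f_i$ on the four irreps of $\Lambda_n$; (iii) the Fourier transforms of each precomputed function $s^{(k)}_{I_{k-1}}(G_{k-2},\cdot)$ on $\mathbb{S}_n/\mathbb{S}_{n-2}$; and (iv) the irrep-wise matrix products $\hat{s}^{(k)}_{I_{k-1}}(G_{k-2},\rho)^\dagger\hat{f}_{i_k}(\rho)$ assembled into the direct sum over $I_k\in[d]^k$. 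The ingredients needed are already available: Lemma~\ref{lemma: precomputing s} controls (i); the efficient $O(n^3)$ FFTs on $\mathbb{S}_n/\mathbb{S}_{n-2}$ cited earlier handle (ii) and (iii); and Lemma~\ref{lemma: fourier sparsity} controls the effective block sizes in (iv).

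For the complexity claim, Lemma~\ref{lemma: precomputing s} contributes $O(d^{k-2}n^3 + d^{k-1}n^2)$ to piece (i). Piece (ii) costs $O(dn^3)$, which is absorbed into $d^{k-1}n^3$ for $k\ge 3$. Piece (iii) performs $4\cdot\binom{7}{k-2}\cdot d^{k-1}$ Fourier transforms (four irreps, $\binom{7}{k-2}$ combinations after the double reduction, $d^{k-1}$ index tuples), each at cost $O(n^3)$; this is the dominant $O(d^{k-1}n^3)$ term. Piece (iv) performs $4\cdot\binom{7}{k-2}\cdot d^k$ matrix products on blocks whose column-supports are described by Lemma~\ref{lemma: fourier sparsity}; each product is bounded by $O(n^2)$, for a total of $O(d^k n^2)$. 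Summing and dropping dominated terms yields the stated $O(d^{k-1}n^3 + d^k n^2)$.

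For the count of non-zero scalar entries, the double reduction confines $G_{k-2}$ to the $\binom{7}{k-2}$ unordered, repetition-free tuples drawn from the seven double-coset representatives $\mathbb{S}_{n-2}\backslash \mathbb{S}_n/\mathbb{S}_{n-2}$, while Lemma~\ref{lemma: fourier sparsity} confines $\rho$ to the four elements of $\Lambda_n$. For fixed $(G_{k-2},\rho)$, the spectrum entry is a direct sum over $I_k\in[d]^k$ of products $\hat{s}^{(k)}_{I_{k-1}}(G_{k-2},\rho)^\dagger\hat{f}_{i_k}(\rho)$. Here the projection $\tau_{\mathbb{S}_{n-2}}$ built into Equation~\ref{eq: reduced k-spectrum} is essential: it forces $\hat{s}^{(k)}$ to inherit the column-support pattern of $\hat{f}$, so that the conjugate transpose has the matching row-support and each irrep-wise product collapses onto a non-zero block of size $p\times p$, where $p$ is the number of non-zero columns listed in Lemma~\ref{lemma: fourier sparsity}. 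Summing $1^2+2^2+1^2+1^2 = 7$ non-zero scalars across $\Lambda_n$ per direct sum term, and multiplying by the $d^k$ direct sum terms and the $\binom{7}{k-2}$ combinations, yields the claimed $\binom{7}{k-2}\cdot 7\cdot d^k$.

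The hard part, I suspect, is not the arithmetic but the sparsity-propagation step underlying both claims: showing cleanly that after applying $\tau_{\mathbb{S}_{n-2}}$ the reduced Fourier object $\hat{s}^{(k)}$ really does inherit from $\hat{f}$ the column-sparsity pattern of Lemma~\ref{lemma: fourier sparsity}, uniformly over all choices of $G_{k-2}$ and $I_{k-1}$. This is what pins down both the $O(n^2)$ cost per product in piece (iv) and the $7$ non-zero scalars per direct sum term, and it must be verified carefully because $\hat{s}^{(k)}$ is the transform of a function built from $k{-}2$ translates of $f$, rather than from $f$ alone as in the original Skew Spectrum argument.
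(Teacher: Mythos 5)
Your proposal is correct and follows essentially the same route as the paper: Lemma~\ref{lemma: precomputing s} for the precomputation, $O(n^3)$ Fourier transforms of the right-$\mathbb{S}_{n-2}$-invariant functions $s^{(k)}$ and $f_i$, $O(n^2)$ sparse products for the direct sum, and the count $\binom{7}{k-2}\cdot 7\cdot d^k$ obtained from $\sum_{\rho\in\Lambda_n} p_\rho^2 = 7$ non-zero scalars per direct-sum term. The sparsity-propagation point you flag is resolved exactly as you suspect --- $s^{(k)}_{I_{k-1}}(G_{k-2},\sigma h)=s^{(k)}_{I_{k-1}}(G_{k-2},\sigma)$ for all $h\in\mathbb{S}_{n-2}$, so Lemma~\ref{lemma: fourier sparsity} applies to $\hat{s}^{(k)}$ verbatim --- and your tally $1^2+2^2+1^2+1^2$ assigns the multiplicity $2$ to $(n-1,1)$ in agreement with Lemma~\ref{lemma: fourier sparsity}, whereas the paper's in-text count attributes the $4$ to $(n-2,2)$; the total of $7$ is unaffected.
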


This result recovers the complexity of the reduced Multi-Orbit Skew Spectrum (Theorem~\ref{theorem: Multi-Orbit Skew Spectrum}) for $k=3$ and that of the reduced Skew Spectrum (Theorem~\ref{theorem: reduced skew spec}) for $d=1$.
The double reduction limits the scaling with $n$.
On a single orbit, this enables the computation of more expressive invariants at the same computational cost of the Skew Spectrum.

\subsection{Complexity analysis}
We briefly prove Theorem~\ref{theorem: doubly-reduced spectrum}, discussing the complexity of the algorithm and the number of non-zero entries.

\paragraph{Main algorithm.} Algorithm~\ref{alg:main kspect} precomputes $s^{(k)}$ (Lemma~\ref{lemma: precomputing s}) and then proceeds to compute the entries of $\hat{\mathcal{S}}^{(k)}_f$. 
The for loops compute $d^{k-1}$ Fourier transforms on $s^{(k)}$ and $d$ on $f_i$, each of which takes $O(n^3)$ operations, since both are functions on $\mathbb{S}_n/\mathbb{S}_{n-2}$.
Finally, the direct sum performs $d^k$ matrix multiplication that cost $O(n^2)$ each (see the sparsity patterns of Lemma~\ref{lemma: fourier sparsity}).
The overall complexity is indeed $O(d^{k-1}n^3 + d^kn^2)$.

\paragraph{Non-zero entries.} 
For each of the $4$ irreps, there are $\binom{7}{k-2}$ lists $G_{k-2}$ on which evaluate the spectrum. Each spectrum entry has $d^k$ terms in the direct sum,  with non-zero entries determined by the irrep.
Using (\ref{eq: reduced k-spectrum}) and the column sparsity of the Fourier transform (Lemma~\ref{lemma: fourier sparsity}), we observe that each $\rho \in \{(n), (n-1, 1), (n-2, 1, 1)\}$ produces $1$ non-zero entry, while $\rho = (n-2, 2)$ produces $4$: $(7=1 \cdot 3 + 4)$.


\begin{figure}[t]
    \centering
    \begin{subfigure}{0.24\linewidth}
        \includegraphics[width=\linewidth]{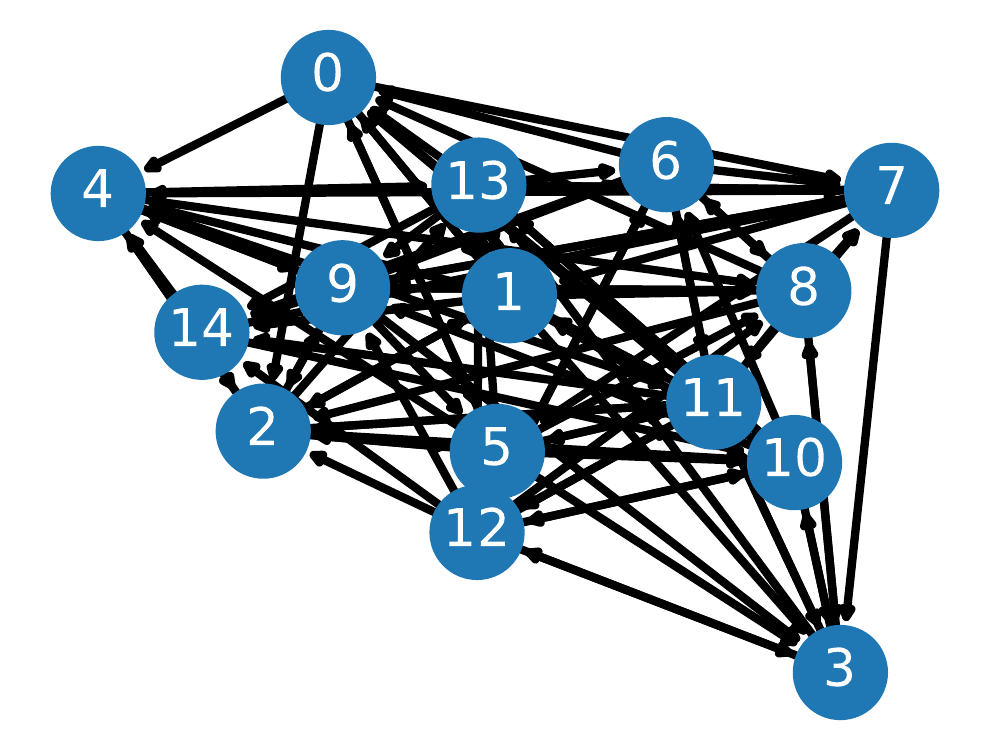}
        \caption{15A.}
        \label{subfig: 15A}
    \end{subfigure}
    \centering
    \begin{subfigure}{0.24\linewidth}
        \includegraphics[width=\linewidth]{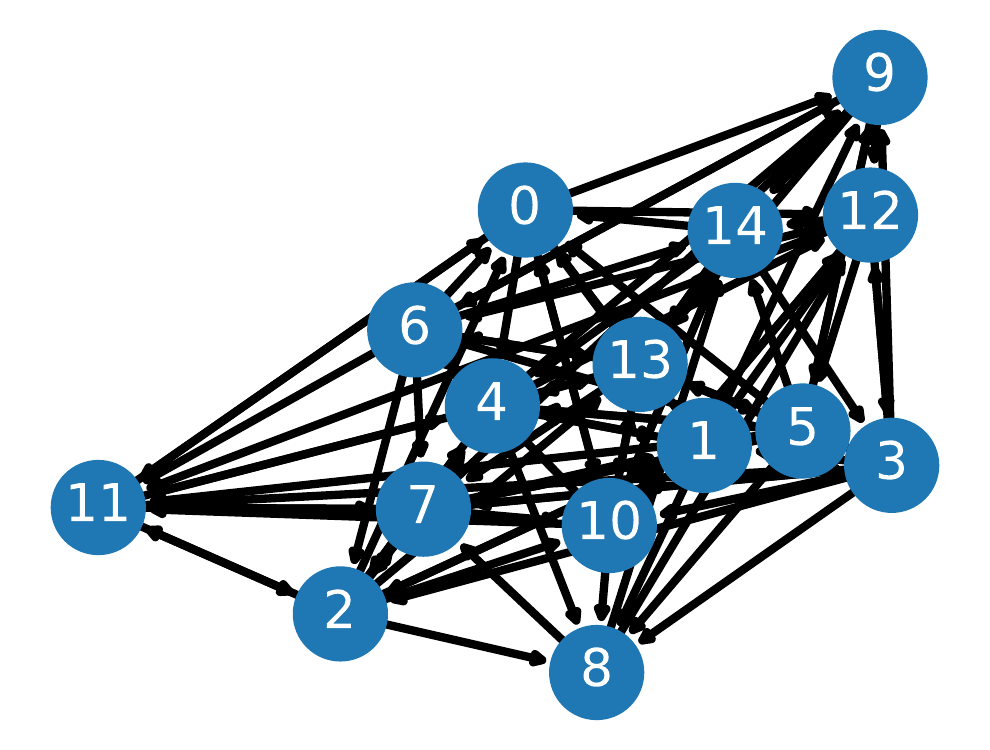}
        \caption{15B.}
        \label{subfig: 15B}
    \end{subfigure}
    \centering
    \begin{subfigure}{0.24\linewidth}
        \includegraphics[width=\linewidth]{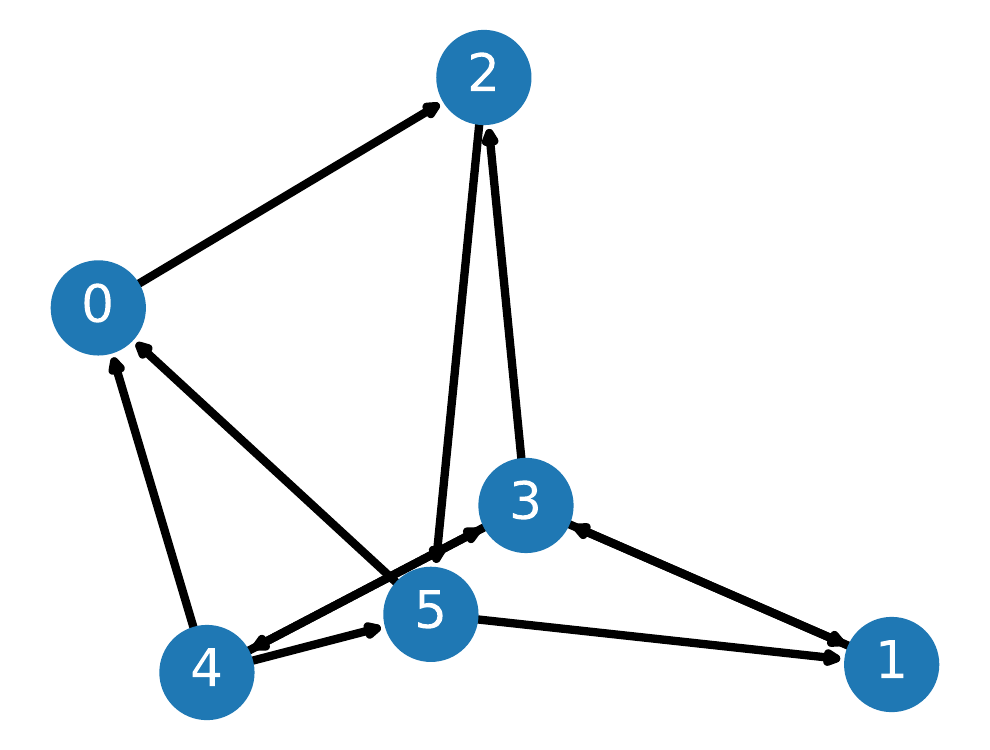}
        \caption{6A}
        \label{subfig: 6A}
    \end{subfigure}
    \centering
    \begin{subfigure}{0.24\linewidth}
        \includegraphics[width=\linewidth]{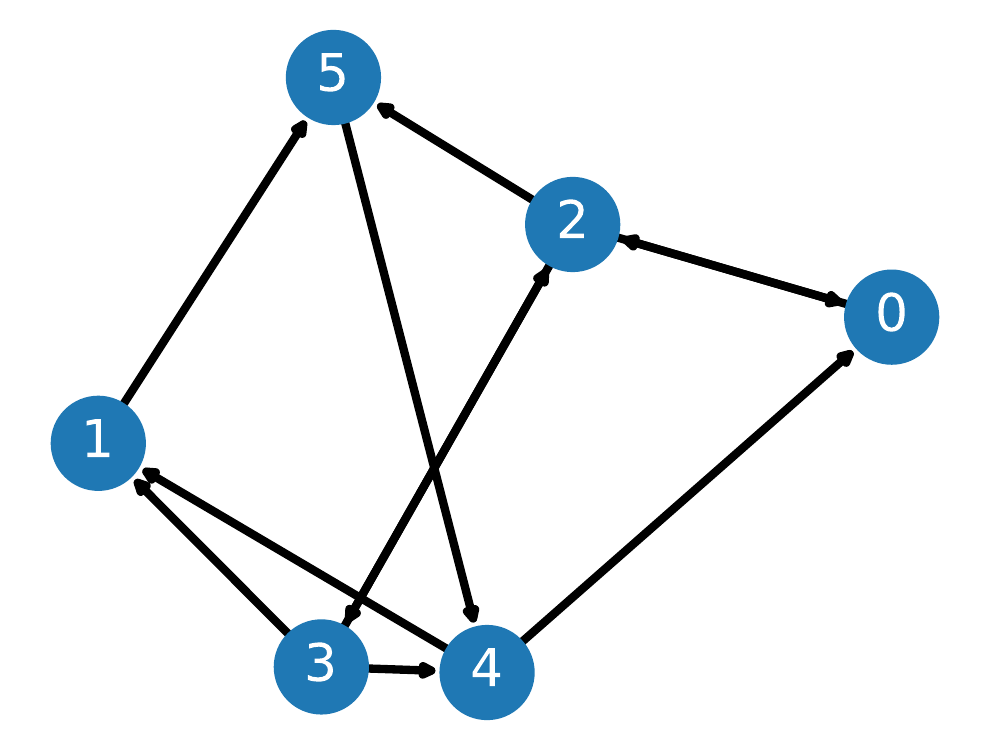}
        \caption{6B.}
        \label{subfig: 6B}
    \end{subfigure}
    \caption{Synthetic dataset with four graphs families. Each family contains $1000$ isomorphic graphs. Families are not isomorphic.}
    \label{fig: experiment multi synth}
\end{figure}

\section{Numerical experiments}
\label{sec: numerical experiments}
We implemented a proof-of-concept for the Doubly-reduced $k$-Spectrum to validate our theoretical insights.
We report experiments that illustrate our contributions and show how the new results add expressivity to the original embedding.

\subsection{Multiple orbits}
\label{subsec: exp on multi orbits}
First, we illustrate the potential of Multi-Orbit embeddings.

\paragraph{Eigenvalue collision.}
A first demonstration considers graphs with node labels, represented by adjacency matrices with labels encoded in the diagonal.
While eigenvalues and singular values also form valid permutation-invariant embeddings (permutations act as similarity transformations through orthogonal matrices, preserving the spectrum), they fail to distinguish such structures.

\begin{figure}[ht]
    \centering
    \begin{subfigure}{0.49\linewidth}
        \includegraphics[width=\linewidth]{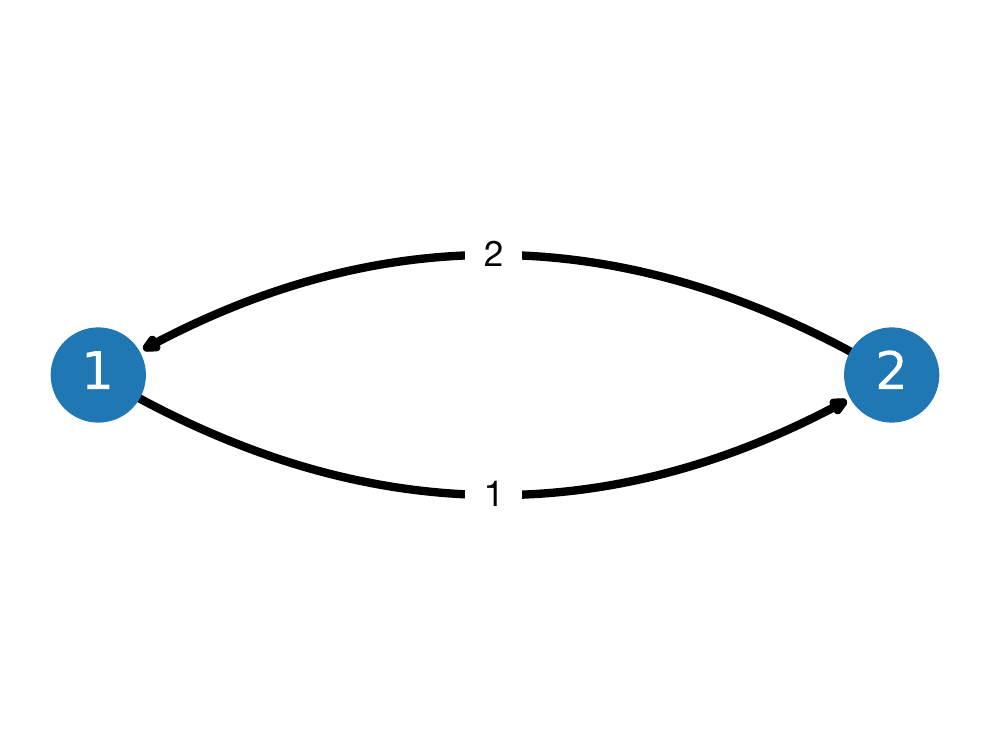}
        \caption{Graph 1.}
        \label{subfig: eigA}
    \end{subfigure}
    \hfill
    \centering
    \begin{subfigure}{0.49\linewidth}
        \includegraphics[width=\linewidth]{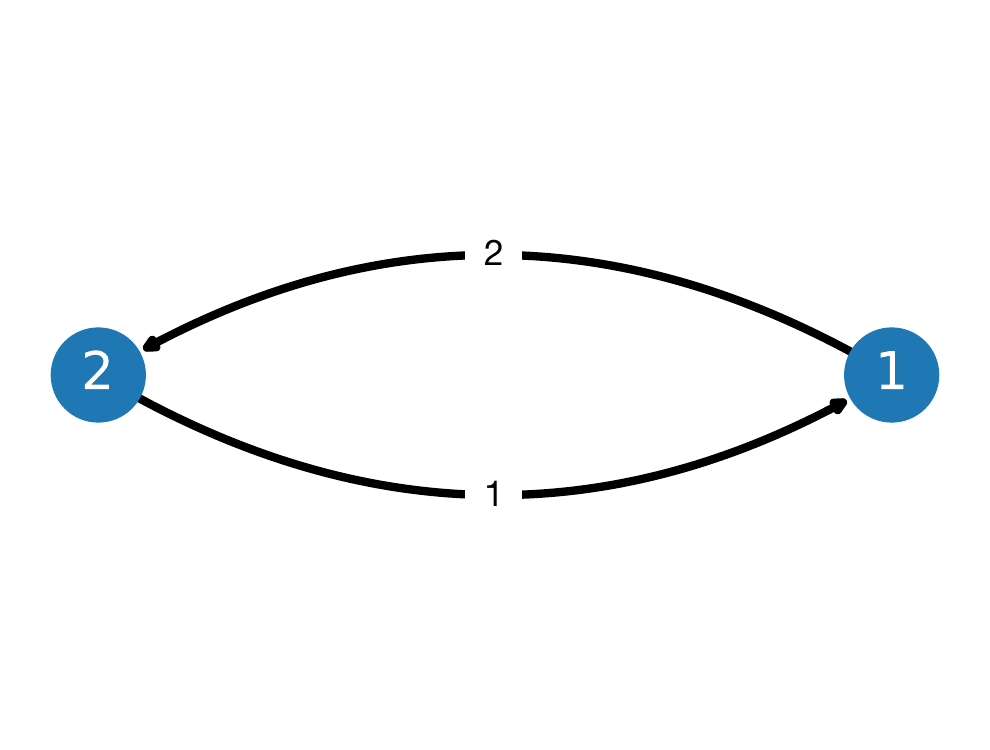}
        \caption{Graph 2.}
        \label{subfig: eigB}
    \end{subfigure}
    \caption{Two directed, weighted, labeled non-isomorphic graphs.}
    \label{fig: eig fail}
\end{figure}

For example, the two graphs in Fig.~\ref{fig: eig fail} are non-isomorphic, as - for instance - the path from $1$ to $2$ has different weights.
Their adjacency matrices, $A_1 = \begin{bmatrix}
    1  &  1      \\
    2  &  2      
\end{bmatrix}$ and $A_2 = \begin{bmatrix}
    2  &  1      \\
    2  &  1      
\end{bmatrix}$
share eigenvalues $\mathrm{Eigs}(A_1)=\mathrm{Eigs}(A_2)=\{3,0\}$ and singular values $\{3.16, 0\}$. However, the reduced $2$-Orbit Skew Spectrum, using $f_1(\sigma) = A_{\sigma(n), \sigma(n-1)}$ and $f_2(\sigma) = A_{\sigma(n), \sigma(n)}$, correctly distinguishes them.

\paragraph{Synthetic graphs.}
We evaluated the Multi-Orbit generalization on a synthetic dataset of labeled, directed, and weighted graphs. 
The dataset consists of four families: $(15A, 15B, 6A, 6B)$~(Fig.~\ref{fig: experiment multi synth}). 
Graphs within a family are isomorphic under edge- and label-preserving permutations, but graphs across different families are not.
Specifically, in $15A$ and $15B$ (and similarly in $6A$ and $6B$), there is a permutation that links the edges and a separate permutation that links the labels, but these permutations are not the same.
The task is to correctly classify the four families.

We processed the graphs using two representations: (1) the concatenation of Single-Orbit Skew Spectra computed on graph structure and labels, and (2) the $2$-Orbit Skew Spectrum. A Random Forest classifier (60 estimators, no max depth) trained on $80\%$ of the data achieves $50\%$ accuracy with the Single-Orbit representation and $100\%$ with the $2$-Orbit representation. 
This is because the concatenation of Single-Orbit Skew Spectra cannot distinguish the couples $15A$-$15B$ and $6A$-$6B$, whose labels and edges are linked by different permutations, while the $2$-Orbit Skew Spectra capture interactions between edges and labels (\ref{eq: multi orbit interactions}).

\begin{table}[t]
\caption{Atomization energy regression on QM7, for different representations and models. Results in Mean Average Error (MAE).}
\label{tab: qm7}
\vskip 0.15in
\begin{center}
\begin{small}
\begin{sc}
\begin{tabular}{lccccr}
\toprule
Representation & XGB & GBR & EN & Linear \\
\midrule
Single-Orbit     & 29.15            & 36.55    & 114.68 & 61.15 \\
$2$-Orbit          & $\mathbf{18.28}$ & 27.12    & 58.60  & 49.45 \\
C's eigs         & 38.04            & 37.92    & 47.83  &   -   \\
Lapl.'s eigs & 23.52            & 26.93    & 47.62  & 47.80 \\
\bottomrule
\end{tabular}
\end{sc}
\end{small}
\end{center}
\vskip -0.1in
\end{table}

\paragraph{QM7.}
We tested the Multi-Orbit generalization on the QM7 dataset, which contains Coulomb matrices of 7,165 molecules with up to 23 atoms~\cite{rupp, blum}. 
The Coulomb matrix $C \in \mathbb{R}^{23 \times 23}$ is defined as $C_{ii} = \frac{1}{2} Z_i^{2.4}$ and $C_{ij}= \frac{Z_iZ_j}{|R_i - R_j|}$, where $Z_i$ is the nuclear charge of the $i$-th atom of the molecule, and $R_i$ is its position.
The task is to predict molecular atomization energies (kcal/mol), renormalized from $-1$ to $1$.

We compare different representations: the Single-Orbit Skew Spectrum on the off-diagonals, the $2$-Orbit Skew Spectrum, and eigenvalues of adjacency and Laplacian matrices. 
We train several models on a $80\%$-$20\%$ split: Extreme Gradient Boosting (XGB), Gradient Boosting Regressor (GBR), Elastic Net (EN), Linear Regression (Linear) \cite{pedregosa2011scikit}.
Table~\ref{tab: qm7} reports the results, with XGB and the $2$-Orbit Skew Spectrum achieving the best performance. 

\subsection{Higher correlations}
\label{sec: exp higher correlations}

We study how higher-order correlations affect expressivity using two datasets of non-isomorphic, unweighted, undirected graphs: the Atlas of all graphs with 7 nodes, and a set of connected chordal graphs with 8 nodes~\cite{brendancombinatorial}.
For each dataset, we computed the Doubly-Reduced $k$-Spectra for $k \in [3,9]$, their concatenation, the eigenvalues of the graph Laplacian, and the $1$-WL test for multiple iterations. We then measured the number of collisions, i.e., cases where non-isomorphic graphs could not be distinguished by each method. Figure~\ref{fig: higher orbits} summarizes the results.

Increasing $k$ significantly improves the expressivity of the Skew Spectrum ($k = 3$), reducing the number of collisions. 
For $n = 7$,the concatenated $k$-Spectra form a complete invariant - no collisions remain - outperforming both the Laplacian eigenvalues and $1$-WL in distinguishing graph structures.
Interestingly, $1$-WL does not achieve a complete invariant for the $n=7$ graphs, and the concatenated $k$-Spectra offer greater expressivity in that setting. On the other hand, for the chordal graphs with $n = 8$, $1$-WL results in fewer collisions than the concatenated spectra.
This outcome supports the perspective discussed in Section~\ref{sec: relationship to wl and gnn}, reinforcing the idea that $k$-correlations and WL tests measure different structural properties and can serve as complementary expressivity metrics.


\begin{figure}[!t]
    \centering
    \begin{subfigure}{0.49\linewidth}
        \includegraphics[width=\linewidth]{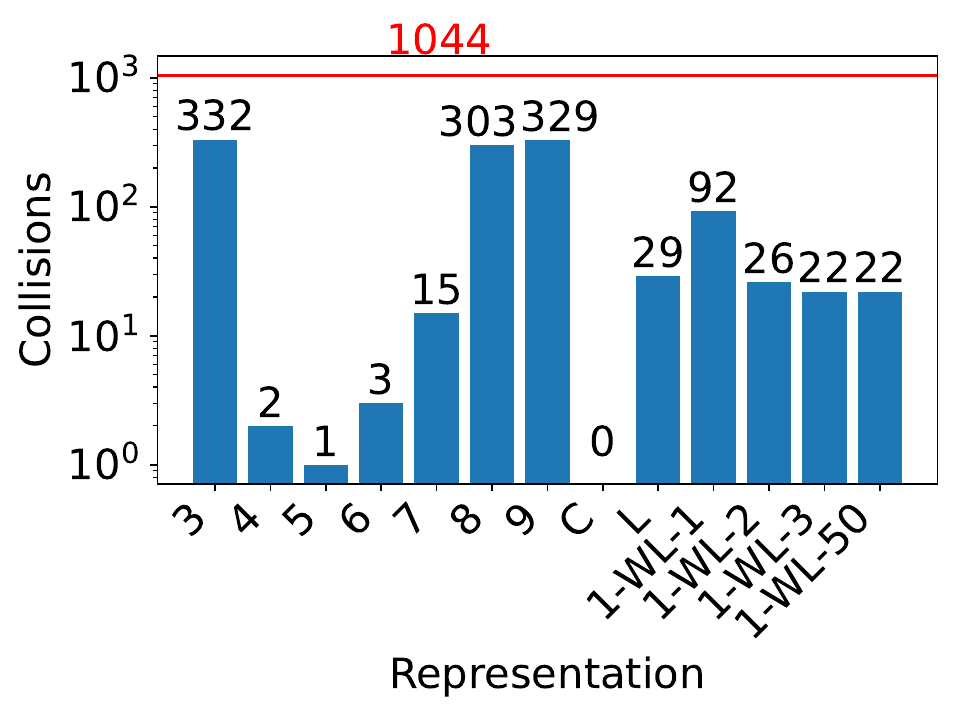}
        \caption{Graph Atlas, $7$ nodes.}
        \label{subfig: histo1}
    \end{subfigure}
    \hfill
    \begin{subfigure}{0.49\linewidth}
        \includegraphics[width=\linewidth]{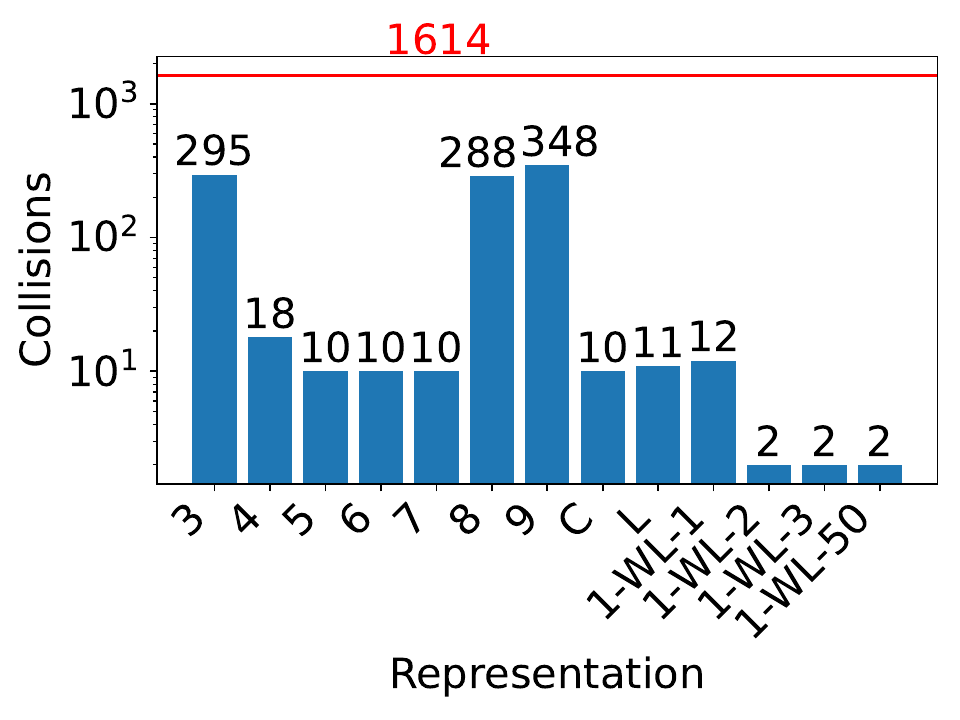}
        \caption{Chordal graphs, 8 nodes.}
        \label{subfig: histo3}
    \end{subfigure}
    \caption{Shattering non-isomorphic graphs with $k$-Spectra ($k \in [3,9]$), their concatenation (C), Laplacian's eigenvalues (L), and 1-WL tests with different iterations (1-WL-ITER).}
    \label{fig: higher orbits}
\end{figure}

\section{Conclusion}
\label{sec: conclusion}
This work generalizes the Skew Spectrum~\cite{kondor2008skew}, advancing group-theoretic methods for permutation-invariant graph embeddings.
We introduced \emph{Multi-Orbit Spectra} for complex graph structures, $k$\emph{-Spectra} to trade-off expressivity and complexity, and \emph{Doubly-Reduced} $k$\emph{-Spectra} to enhance expressivity of the original Skew Spectrum without added cost.
We support our theoretical insights with numerical experiments.
This study complements modern approaches and opens new avenues for research in graph representation.

\paragraph{Future directions.} 
On the \emph{theoretical side}, (1) a key open question is whether there exists a finite $k_{\max}$ such that the collection of all $k$-correlations up to $k_{\max}$ forms a complete invariant, which remains to be proven.
(2) Our framework, focused on $\mathbb{S}_n/\mathbb{S}_{n-2}$, could naturally be extended to functions on $G/H$ for arbitrary groups/subgroups, broadening its scope beyond graphs to invariants for other structured data domains.
(3) Another avenue is exploring other families of translation-invariant polynomials beyond $k$-correlations.

On the \emph{practical side}, (1) developing a scalable, optimized open-source implementation with thorough benchmarking is a crucial step toward real-world adoption. (2) Integrating our methods with modern deep learning frameworks offers further opportunities: (2.1) introducing trainable parameters, such as learning the relative importance of correlations between orbits, or (2.2) using the generalized Skew Spectrum as a positional encoding, aggregation, or pooling function to enhance neural representations.


\section*{Impact Statement}
This paper presents work whose goal is to advance the field of 
Machine Learning. There are many potential societal consequences 
of our work, none which we feel must be specifically highlighted here.

\section*{Acknowledgements}
First and foremost, we would like to thank Ramakrishna Kakarala for providing us with a soft copy of his thesis.
We also extend our sincere thanks to the Quantum Open Source Foundation (QOSF), through which the authors met and this project started.

A.B. gratefully acknowledges Stefano Gogioso and Erickson Tjoa for insightful discussions that helped improve the mathematical notation; Filippo Guerranti for providing references, background material, and assistance in revising the manuscript and supporting the submission process; and Alessandro Palma for valuable feedback on experiments and the submission process.
He also thanks Professors Stefano Zanero, Donatella Sciuto, and Ferruccio Resta for their support during his Ph.D., Patrick Rebentrost for hosting him at CQT during part of this research, and Professor Ignacio Cirac for his support during his postdoctoral work. 
A.B.'s research was partially funded by THEQUCO as part of the Munich Quantum Valley, supported by the Bavarian State Government through the Hightech Agenda Bayern Plus. Additional financial support was provided by ICSC - \quoted{National Research Centre in High Performance Computing, Big Data and Quantum Computing,} Spoke 10, funded by the European Union - NextGenerationEU, under grant \emph{PNRR-CN00000013-HPC}.

M.P. acknowledges support from the Deutsche Forschungsgemeinschaft (DFG, German Research Foundation, project numbers 447948357 and 440958198), the Sino-German Center for Research Promotion (Project M-0294), the German Ministry of Education and Research (Project QuKuK, BMBF Grant No. 16KIS1618K), the DAAD, the Alexander von Humboldt Foundation, and the Nieders\"achsisches Ministerium f\"ur Wissenschaft und Kultur. 

A.L. acknowledges the support of the National Research Foundation, Singapore, and A*STAR under its Centre for Quantum Technologies Funding Initiative (S24Q2d0009), and the Quantum Engineering Programme (QEP 2.0) under grants \emph{NRF2021-QEP2-02-P05} and \emph{NRF2021-QEP2-02-P01}.

\onecolumn
\appendix

\section{The Symmetric Group $\mathbb{S}_n$}
\label{apx: symmetric group}
We review key concepts related to the symmetric group $\mathbb{S}_n$ to support our discussion. For a more in-depth treatment, see \citet{grillet2007abstract}, \citet{rotman1999introduction}, \citet{raczka1986theory}, and the background sections of \citet{armando2024quantum}.

The Symmetric Group $\mathbb{S}_n$ is the group of all permutations of $n$ objects.
Each permuation $g \in \mathbb{S}_n$ acts by mapping the indices of objects in an ordered set to new positions, effectively defining a bijective function $g: [n] \rightarrow [n]$.
The group has $|\mathbb{S}_n| = n!$ elements, corresponding to all possible rearrangements of $n$ indices.

Besides the cycle notation used in the main text, permutations can be conveniently represented as arrays:
\begin{align}
g = \begin{blockarray}{ccccc} 1 & 2 & \cdots & n-1 & n \\
                                \begin{block}{[ccccc]}
                                    g(1), & g(2), & \cdots & g(n-1), & g(n) \\ 
                                \end{block}
            \end{blockarray}
\end{align}
where $g(i)$ denotes the image of index $i$ under $g$.
For instance, the transposition swapping the first two elements in $\mathbb{S}_4$ is
$g = \begin{blockarray}{cccc} 1 & 2 & 3 & 4\\
                                \begin{block}{[cccc]}
                                    2, & 1, & 3, & 4\\ 
                                \end{block}
            \end{blockarray}.$

This notation makes it easy to identify a left coset transversal for $\mathbb{S}_n/\mathbb{S}_{n-2}$ as a set containing one representative for each group element that fixes the last two positions $g = \begin{blockarray}{cccc} 1 & \cdots & n-1 & n\\
                                \begin{block}{[cccc]}
                                    \cdots, & \cdots, & i, & j\\ 
                                \end{block}
            \end{blockarray}.$
Since $i,j \in [n]$ and $i \neq j$, there are $n(n-1)$ such representatives, giving $|\mathbb{S}_n/\mathbb{S}_{n-2}| = n(n-1)$.

Functions on graphs, such as $f(\sigma) = A_{\sigma(n), \sigma(n-1)}$, depend only on the image of the last two indices. 
This makes it clear that
\begin{align}
    f(\sigma h) = f(\sigma), \text{for all } \sigma \in \mathbb{S}_n, h \in \mathbb{S}_{n-2},
\end{align}
since elements $h \in \mathbb{S}_{n-2}$ preserve the last two indices: $h = \begin{blockarray}{cccc} 1 & \cdots & n-1 & n\\
                                \begin{block}{[cccc]}
                                    \cdots, & \cdots, & n-1, & n\\ 
                                \end{block}
            \end{blockarray}.$

\subsection{Irreducible representations}
We introduce the concepts that are most relevant to our work. 
For a more comprehensive treatment of the irreducible representations of $\mathbb{S}_n$, see \citet{sagan2013symmetric}, \citet{zhao2008young}, and the background sections of \citet{armando2024quantum}.

For our focus, a representation of $\mathbb{S}_n$ is a mapping $\rho: \mathbb{S}_n \rightarrow U(\mathrm{dim}(\rho))$ that assigns each permutation $\sigma \in \mathbb{S}_n$ a unitary matrix of size $\mathrm{dim}(\rho) \times \mathrm{dim}(\rho)$.
This mapping satisfies the following properties: 
\begin{enumerate}
    \item $\rho(e) = I$, where $e$ is the identity permutation and $I$ the identity matrix.
    \item $\rho(\sigma_1\sigma_2) = \rho(\sigma_1)\rho(\sigma_2)$ for all $\sigma_1, \sigma_2 \in \mathbb{S}_n$.
    \item $\rho(\sigma^{-1}) = \rho(\sigma)^\dagger$.
\end{enumerate}

Since the direct sum of two representations is also a representation, we say that a representation is irreducible (irrep) if it cannot be decomposed into a direct sum of smaller representations. 
That is, if $\rho(\sigma) = \rho_1(\sigma) \oplus \rho_2(\sigma)$ for all $\sigma \in \mathbb{S}_n$, then either $\mathrm{dim}(\rho_1) = 0$ or $\mathrm{dim}(\rho_2) = 0$.

Irreducible representations of $\mathbb{S}_n$ are classified by partitions of $n$, up to equivalence under invertible linear transformations.
A partition of $n$ is a sequence $\lambda = (\lambda_1, \lambda_2, \dots, \lambda_l)$ of weakly decreasing positive integers $(\lambda_i \geq \lambda_{i+1})$ that sum to $n$. 
We denote this by $\lambda \vdash n$ and often use $\rho$ interchangeably for both the partition and the corresponding representation.
The trivial partition of $n$ is $(n)$, which corresponds to the trivial representation $\rho: \mathbb{S}_n \rightarrow \C$ given by $\rho(\sigma) = 1$ for all $\sigma$.

A convenient way to visualize partitions is through Ferres diagrams (or Young diagrams), which consist of rows of boxes, where the $i$-th row contains $\lambda_i$ boxes.
Because $\lambda$ is weakly decreasing, each row contains at most as many boxes as the row above.
For example, the partition $(7, 4, 3, 1) \vdash 15$ is represented as:
\begin{center}
    \begin{ytableau}
      \none  &  &  &  &  &  &  &  \\
      \none  &  &  &  &  & \none \\
      \none  &  &  & \\
      \none  &  \\
    \end{ytableau}
\end{center}

A Young tableau is a Young diagram filled with the numbers $1$ through $n$ without repetition.
A standard Young tableau has strictly increasing numbers along each row and each column.
The dimension of an irrep labeled by $\lambda$ is given by the number of standard Young tableaux of shape $\lambda$.
This number can be computed through the hook lenght formula. 
Table~\ref{tab: irrep size} lists the dimensions of irreps relevant to our work.

\begin{table}[h]
    \centering
    \caption{Dimensions of irreducible representations of $\mathbb{S}_n$ indexed by partitions $\lambda \vdash n$. The dimension is given by the hook-length formula $f^{\lambda}$, which counts the number of standard tableaux of shape $\lambda$. These values are from~\citet[Table 1.1]{kondor2008thesis}.}
    \label{tab: irrep size}
    \vskip 0.15in
    \begin{tabular}{|c|c|}
         \hline
       $\lambda$  & $f^{\lambda}$ or $\dim(\rho_\lambda)$ \\ \hline
        $(n)$        & $1$ \\ \hline
        $(n - 1, 1)$ & $n-1$ \\ \hline
        $(n - 2, 2)$ & $\frac{n(n-3)}{2}$ \\ \hline
        $(n - 2, 1, 1)$ & $\frac{(n-1)(n-2)}{2}$ \\
        \hline
    \end{tabular}
\end{table}

In this work, we use the Young Orthogonal Representation (YOR), a specific irrep of $\mathbb{S}_n$ with real-valued, sparse matrices.
This representation is adapted to the subgroup chain $\mathbb{S}_n \rightarrow \mathbb{S}_{n-1} \rightarrow \cdots \rightarrow \mathbb{S}_{1}$, meaning that any $\rho \vdash k$, when restricted to $\mathbb{S}_{k-1}$, decomposes as:
\begin{align}
\label{eq: adapted irrep}
    \rho \downarrow_{\mathbb{S}_{k-1}}(h) = \bigoplus_{\eta: \eta < \rho} \eta(h), \quad h \in \mathbb{S}_{k-1}.
\end{align}
Here $\rho \vdash k$, $\eta \vdash k-1$, and the notation $\eta < \rho$ means that $\eta$ can be obtained by removing one box from the Young diagram of $\rho$.
This structure provides an efficient way to decompose irreps of $\mathbb{S}_n$ when acting on elements of a subgroup $\mathbb{S}_{n-k}$.
Figure~\ref{fig: young lattice} illustrates this decomposition for $\mathbb{S}_4$. 
Such diagrams are often termed Bratteli diagrams or Young diagrams.

\begin{figure}[t]
\centering
\resizebox{0.8\textwidth}{!}{
\begin{tikzpicture}
\node (d){$\begin{ytableau}
                            \none  & \\
                        \end{ytableau}$};
\node (m)[above=of d]{};

\node (l)[right=of m]{$\begin{ytableau}
                            \none  &  &\\
                        \end{ytableau}$};

\node (n)[left=of m]{$\begin{ytableau}
                            \none  & \\
                            \none  & \\
                        \end{ytableau}$};
\node (t)[above=of m]{$\begin{ytableau}
                            \none  &  &\\
                            \none  & \\
                        \end{ytableau}$};
\node (s)[right=of t]{};

\node (r)[right=of s]{$\begin{ytableau}
                            \none  &  &  &\\
                        \end{ytableau}$};
\node (u)[left=of t]{};

\node (v)[left=of u]{$\begin{ytableau}
                            \none  & \\
                            \none  & \\
                            \none  & \\
                        \end{ytableau}$};
\node (qq)[above=of v]{};
\node (ac)[above=of qq]{$\begin{ytableau}
                            \none  &  & \\
                            \none  & \\
                            \none  & \\
                        \end{ytableau}$};
\node (dd)[right=of ac]{};
\node (y)[right=of dd]{$\begin{ytableau}
                            \none  &  & \\
                            \none  &  & \\
                        \end{ytableau}$};
\node (dc)[left=of ac]{};
\node (pd)[left=of dc]{};
\node (bh)[left=of pd]{$\begin{ytableau}
                            \none  & \\
                            \none  & \\
                            \none  & \\
                            \none  & \\
                        \end{ytableau}$};

\node (pp)[right=of y]{};
\node (k)[right=of pp]{$\begin{ytableau}
                            \none  &  &  &\\
                            \none  & \\
                        \end{ytableau}$};

\node (ad)[right=of k]{};
\node (bg)[right=of ad]{$\begin{ytableau}
                            \none  &  &  &  &\\
                        \end{ytableau}$};

\draw [->] (d)--(l);
\draw [->] (d)--(n);

\draw [->] (l)--(r);
\draw [->] (l)--(t);
\draw [->] (n)--(t);
\draw [->] (n)--(v);

\draw [->] (r)--(bg);
\draw [->] (r)--(k);
\draw [->] (t)--(k);
\draw [->] (t)--(y);
\draw [->] (t)--(ac);
\draw [->] (v)--(ac);
\draw [->] (v)--(bh);

\end{tikzpicture}
}
\caption{Young lattice for $\mathbb{S}_4$.}
\label{fig: young lattice}
\end{figure}
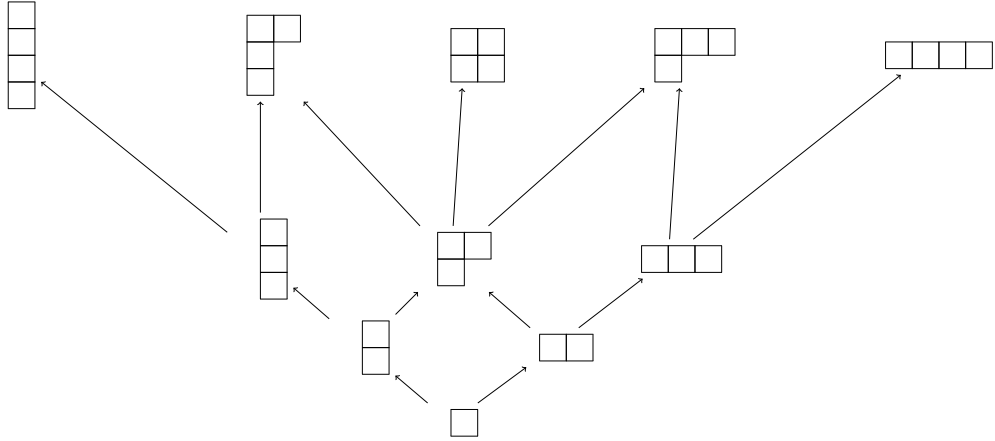

Finally, a useful result in representation theory, sometimes referred to as the Great Orthogonality Theorem (\citet[Section II.3]{Zee-groupTheory}), states that for any two irreps $\rho_1, \rho_2,$
\begin{align}
\label{eq: GOT}
    \sum_{g \in \mathbb{S}_n} \rho_1(g)^\dagger_{i,j}\rho(g)_{k,l} = \frac{|\mathbb{S}_n|}{\mathrm{dim}(\rho)} \delta_{\rho_1, \rho_2} \delta_{i,l} \delta_{j,k}.
\end{align}
Here, $\delta_{\rho_1, \rho_2}$ is the Kronecker delta, which is $1$ if $\rho_1$ and $\rho_2$ are equivalent (i.e., represent the same irrep) and $0$ otherwise.
A direct consequence of (\ref{eq: GOT}) is that summing over all elements of $\mathbb{S}_n$ in a given irrep of $\rho \vdash n$ yields:
\begin{align}
\label{eq: irrep ortho}
    \sum_{\sigma \in \mathbb{S}_n} \rho(\sigma) = \begin{cases}
        |\mathbb{S}_n| & \rho = (n)\\
        0 & \text{otherwise}.
    \end{cases}
\end{align}
This follows from the fact that the trivial representation $(n)$ has dimension $1$ and consists entirely of ones.

\section{Fourier sparsity and projecting on the trivial irrep}
\label{apx: projection on trivial}

In this appendix, we examine the role of $\tau_{\mathbb{S}_{n-2}}(\rho)$, which represents the projection of an irrep $\rho \in \mathrm{Irr}(\mathbb{S}_n)$ onto the trivial representation of $\mathbb{S}_{n-2}$:
\begin{align}
    \tau_{\mathbb{S}_{n-2}}(\rho) = \frac{1}{|\mathbb{S}_{n-2}|}\sum_{h \in \mathbb{S}_{n-2}} \rho(h)
\end{align}
This projection naturally arises in the Fourier transform of functions $\mathbb{S}_n/\mathbb{S}_{n-2} \rightarrow \C$~\cite{kondor2007homogeneus}:
\begin{align}
    \hat{f}(\rho) = \frac{1}{|\mathbb{S}_n|} \sum_{\sigma \in \mathbb{S}_n} f(\sigma)\rho(\sigma) &= \frac{1}{|\mathbb{S}_n|} \sum_{y \in \mathbb{S}_n/\mathbb{S}_{n-2}}\sum_{h \in \mathbb{S}_{n-2}} f(yh)\rho(yh) \\
    &= \frac{|\mathbb{S}_{n-2}|}{|\mathbb{S}_n|} \sum_{y \in \mathbb{S}_n/\mathbb{S}_{n-2}} f(y)\rho(y) \frac{1}{|\mathbb{S}_{n-2}|}\sum_{h \in \mathbb{S}_{n-2}}\rho(h) \\
    &= \frac{|\mathbb{S}_{n-2}|}{|\mathbb{S}_n|} \sum_{y \in \mathbb{S}_n/\mathbb{S}_{n-2}} f(y) \rho(y) \tau_{\mathbb{S}_{n-2}}(\rho).
\end{align}
Moreover, $\tau_{\mathbb{S}_{n-2}}(\rho)$ is responsible for the sparsity patterns described in Lemma~\ref{lemma: fourier sparsity}.

\begin{figure}[t]
\centering
\begin{tikzpicture}
\node (a) {$(n-2)$};
\node (b) [above=of a]{$(n-2,1)$};
\node (c) [above=of a, right=of b]{$(n-1)$};
\node (d) [above=of c]{$(n-1,1)$};
\node (e) [right=of d]{$(n)$}; 
\node (f) [above=of b]{$(n-2,2)$};
\node (g) [left=of f]{$(n-2,1,1)$}; 

\draw [->] (b)--(a);
\draw [->] (c)--(a);
\draw [->] (d)--(c);
\draw [->] (d)--(b);
\draw [->] (e)--(c);
\draw [->] (f)--(b);
\draw [->] (g)--(b);

\end{tikzpicture}
\caption{Partial Young lattice of the partitions of $n$ reaching $(n-2)$.}
\label{fig: partial partitions}
\end{figure}
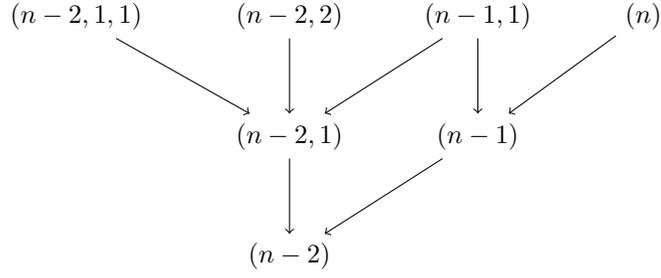

Using the fact that YOR is adapted (\ref{eq: adapted irrep}) , along with a special case of the Great Orthogonality Theorem (\ref{eq: irrep ortho}), we obtain:
\begin{align}
    \tau_{\mathbb{S}_{n-2}}(\rho) = \frac{1}{|\mathbb{S}_{n-2}|}\bigoplus_{\eta < \rho} \sum_{h \in \mathbb{S}_{n-2}} \eta(h) = \bigoplus_{\eta < \rho} \begin{cases}
        1 & \text{if } \eta=(n-2)\\
        0 & \text{otherwise}.
    \end{cases}
\end{align}
This shows that $\tau_{\mathbb{S}_{n-2}}(\rho)$ is a diagonal matrix containing mostly $0$s, except for a few $1$s corresponding to the positions where $(n-2)$ appears in the decomposition of $\rho$.

From the structure of the Young lattice, we see that there are exactly four partitions of $n$ that contain $(n-2)$:
\begin{align}
    (n), (n-1,1), (n-2, 2), (n-2, 1, 1).
\end{align} 
This set of partitions is refereed to as $\Lambda_n$ in the main text.
Figure~\ref{fig: partial partitions} provides a visualization of a partial Young lattice, illustrating these decompositions.

Since all other irreps do not contain $(n-2)$, their projection $\tau_{\mathbb{S}_{n-2}}(\rho)$ is the zero matrix, implying that their Fourier transform also vanishes.

Further observation reveals that three of these partitions - $(n), (n-1,1), (n-2, 1, 1)$ - contain $(n-2)$ exactly once, while $(n-2, 2)$ contains it twice.
The number of paths from $\rho$ to $(n-2)$ in the Young lattice determines how many diagonal entries of $\tau_{\mathbb{S}_{n-2}}(\rho)$ are $1$.

Combining this observation with the dimensions of the irreps (see Table~\ref{tab: irrep size}) completes the proof of Lemma~\ref{lemma: fourier sparsity}.

Finally, note that, since $\tau_{\mathbb{S}_{n-2}}(\rho)$ is a diagonal matrix, it follows that $\tau_{\mathbb{S}_{n-2}}(\rho)= \tau_{\mathbb{S}_{n-2}}(\rho)^\dagger$.

\section{Precomputing $s^{(k)}$ with Dynamic Programming}
\label{apx: data structure}
In this Appendix, we prove the correctness of Lemma~\ref{lemma: precomputing s} and Algorithm~\ref{alg: precomputation}.

We begin by proving the following theorem.
\begin{theorem}
    The Fourier transform of 
    \begin{align}
    \label{eq: sk}
        s^{(k)}_{I_{k-1}}(G_{k-2}, \sigma) = \frac{f_{i_{k-1}}(\sigma)}{|\mathbb{S}_{n-2}|} \sum_{h \in \mathbb{S}_{n-2}} \prod_{l=1}^{k-2} f_{i_l}(\sigma h g_l)
    \end{align}
    over the last entry produces $\hat{s}^{(k)}_{I_{k-1}}(G_{k-2}, \rho)=\hat{r}^{(k)}_{I_{k-1}}(G_{k-2}, \rho) \tau_{\mathbb{S}_{n-2}}(\rho)$, where $\hat{r}^{(k)}_{I_{k-1}}(G_{k-2}, \rho) = \frac{1}{|\mathbb{S}_n|}  \sum_{\tilde{g} \in \mathbb{S}_n}  f_{i_{k-1}}(\tilde{g}) \prod_{l=1}^{k-2} f_{i_l}(\tilde{g} g_l) \rho(\tilde{g})$ and 
    $\tau_{\mathbb{S}_{n-2}}(\rho)$ is the projection of $\rho$ onto the trivial irrep $(n-2)$.
\end{theorem}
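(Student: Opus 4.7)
The plan is to write down the Fourier transform of $s^{(k)}$ from its definition, perform a single change of variables $\tilde g = \sigma h$ so that the awkward arguments $\sigma h g_l$ become the cleaner $\tilde g g_l$ appearing in $\hat r^{(k)}$, and then factor the resulting double sum into a product where the $h$-sum collapses to $\tau_{\mathbb{S}_{n-2}}(\rho)$.

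First I would write
\begin{equation*}
\hat s^{(k)}_{I_{k-1}}(G_{k-2},\rho)=\frac{1}{|\mathbb{S}_n|}\sum_{\sigma\in\mathbb{S}_n}\frac{f_{i_{k-1}}(\sigma)}{|\mathbb{S}_{n-2}|}\sum_{h\in\mathbb{S}_{n-2}}\Bigl(\prod_{l=1}^{k-2}f_{i_l}(\sigma h g_l)\Bigr)\rho(\sigma),
\end{equation*}
exchange the two sums, and for each fixed $h$ substitute $\tilde g=\sigma h$, which is a bijection on $\mathbb{S}_n$ so the measure is preserved. Under this substitution $\sigma=\tilde g h^{-1}$; since $f_{i_{k-1}}$ is a function on $\mathbb{S}_n/\mathbb{S}_{n-2}$ and $h^{-1}\in\mathbb{S}_{n-2}$, the right-invariance gives $f_{i_{k-1}}(\tilde g h^{-1})=f_{i_{k-1}}(\tilde g)$; meanwhile the inner product of $f_{i_l}$ terms becomes $\prod_{l=1}^{k-2}f_{i_l}(\tilde g g_l)$ as desired, and the representation splits as $\rho(\tilde g h^{-1})=\rho(\tilde g)\rho(h)^{\dagger}$ by the homomorphism property and unitarity.

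Next I would collect the $\tilde g$-dependent pieces on the left and the $h$-dependent piece on the right to obtain
\begin{equation*}
\hat s^{(k)}_{I_{k-1}}(G_{k-2},\rho)=\Bigl[\frac{1}{|\mathbb{S}_n|}\sum_{\tilde g\in\mathbb{S}_n}f_{i_{k-1}}(\tilde g)\prod_{l=1}^{k-2}f_{i_l}(\tilde g g_l)\,\rho(\tilde g)\Bigr]\Bigl[\frac{1}{|\mathbb{S}_{n-2}|}\sum_{h\in\mathbb{S}_{n-2}}\rho(h)^{\dagger}\Bigr].
\end{equation*}
The bracket on the left is exactly $\hat r^{(k)}_{I_{k-1}}(G_{k-2},\rho)$ by definition. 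For the bracket on the right, I would reindex by $h\mapsto h^{-1}$, which is a bijection of $\mathbb{S}_{n-2}$, so the sum equals $\frac{1}{|\mathbb{S}_{n-2}|}\sum_{h\in\mathbb{S}_{n-2}}\rho(h)=\tau_{\mathbb{S}_{n-2}}(\rho)$, as stated.

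I do not expect a serious obstacle here: the argument is essentially a Fubini-plus-translation computation, and the only subtle point is to be careful that the substitution $\tilde g=\sigma h$ respects the $\mathbb{S}_{n-2}$-invariance of $f_{i_{k-1}}$ (which is what lets us pull $f_{i_{k-1}}(\sigma)$ out as $f_{i_{k-1}}(\tilde g)$ despite the $h$-dependence of the change of variables). The only other thing worth flagging is the reindexing $h\mapsto h^{-1}$ used to turn $\rho(h)^{\dagger}$ into $\rho(h)$ inside the average; alternatively, since YOR yields real orthogonal matrices and $\tau_{\mathbb{S}_{n-2}}(\rho)$ is a diagonal $0/1$ projector (Appendix~\ref{apx: projection on trivial}), $\tau_{\mathbb{S}_{n-2}}(\rho)^{\dagger}=\tau_{\mathbb{S}_{n-2}}(\rho)$ directly.
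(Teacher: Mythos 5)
Your proof is correct and follows essentially the same route as the paper's: the change of variables $\tilde g=\sigma h$, the use of the right $\mathbb{S}_{n-2}$-invariance of $f_{i_{k-1}}$ to absorb the $h^{-1}$, the factorization $\rho(\tilde g h^{-1})=\rho(\tilde g)\rho(h)^{\dagger}$, and the identification of the averaged sum with $\tau_{\mathbb{S}_{n-2}}(\rho)$ via its self-adjointness (the paper invokes $\tau_{\mathbb{S}_{n-2}}(\rho)=\tau_{\mathbb{S}_{n-2}}(\rho)^{\dagger}$ where you alternatively reindex $h\mapsto h^{-1}$, an immaterial difference). No gaps.
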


\begin{proof}
We compute $\hat{s}^{(k)}$ using the expressions from (\ref{eq: sk}) and (\ref{eq: fourier transform}):
\begin{align}
    \hat{s}^{(k)}_{I_{k-1}}(G_{k-2}, \rho) &= \frac{1}{|\mathbb{S}_n|} \sum_{\overline{g} \in \mathbb{S}_n} \frac{f_{i_{k-1}}(\overline{g})}{|\mathbb{S}_{n-2}|} \sum_{h \in \mathbb{S}_{n-2}} \prod_{l=1}^{k-2} f_{i_l}(\overline{g} h g_l) \rho(\overline{g}) \quad\quad\quad\quad\quad\quad\quad\quad\quad\quad\quad\quad\quad\quad~ (\text{use } \tilde{g} = \overline{g}h)\\
    &= \frac{1}{|\mathbb{S}_n|} \frac{1}{|\mathbb{S}_{n-2}|} \sum_{\tilde{g} \in \mathbb{S}_n} \sum_{h \in \mathbb{S}_{n-2}} f_{i_{k-1}}(\tilde{g}h^{-1}) \prod_{l=1}^{k-2} f_{i_l}(\tilde{g} g_l)     \rho(\tilde{g}h^{-1})\\
    &= \frac{1}{|\mathbb{S}_n|}  \sum_{\tilde{g} \in \mathbb{S}_n}  f_{i_{k-1}}(\tilde{g}) \prod_{l=1}^{k-2} f_{i_l}(\tilde{g} g_l) \rho(\tilde{g})\left(\frac{1}{|\mathbb{S}_{n-2}|} \sum_{h \in \mathbb{S}_{n-2}} \rho(h)^\dagger\right) \quad\quad (\text{use } \tau_{\mathbb{S}_{n-2}}(\rho) = \tau_{\mathbb{S}_{n-2}}(\rho)^\dagger)\\
    &= \hat{r}^{(k)}_{I_{k-1}}(G_{k-2}, \rho) \tau_{\mathbb{S}_{n-2}}(\rho).
\end{align}
\end{proof}

It is also important to observe that $s^{(k)}_{I_{k-1}}(G_{k-2}, \sigma) = s^{(k)}_{I_{k-1}}(G_{k-2}, \sigma h)$ for any $h \in \mathbb{S}_{n-2}$.
This implies that one can use a fast Fourier transform on $\mathbb{S}_{n}/\mathbb{S}_{n-2}$ to obtain $\hat{s}^{(k)}_{I_{k-1}}(G_{k-2}, \rho)$.

Now, we address the task of efficiently precomputing (\ref{eq: sk}) for $i_1, \dots, i_{k-1} \in [d]$, $g_1, \dots, g_{k-2}$ taking values in the double coset transversal $\mathbb{S}_{n-2}\backslash \mathbb{S}_n / \mathbb{S}_{n-2} = \{(), (n-1, n), (n-2, n-1), (n-2, n),(n-2, n-1, n), (n-2, n, n-1), (n-3, n-1)(n-2, n)\}$, and $\sigma \in \mathbb{S}_n/\mathbb{S}_{n-2}$.

Naively computing (\ref{eq: sk}) for all the possible entries would be prohibitively expensive. 
Specifically, the sum over $\mathbb{S}_{n-2}$ contains $(n-2)!$ terms, making this computation infeasible even for small $n$.
To circumvent this, we employ a divide-and-conquer strategy plus dynamic programming.

The main idea is to decompose (\ref{eq: sk}) as 
\begin{align}
        s^{(k)}_{I_{k-1}}(G_{k-2}, \sigma) = \frac{f_{i_{k-1}}(\sigma)}{n-2} Q_{I_{k-2}}(G_{k-2}, \sigma)
\end{align}
where $Q_{I_{k-2}}(G_{k-2}, \sigma) = \sum_{j=1}^{n-2} \prod_{l=1}^{k-2} P_{i_l}(g_l, \sigma, j)$ and we have $7$ functions $P$, one per element in the double coset
\begin{enumerate}
    \item $P_i (g_1, \sigma, j) = f_i(\sigma)$.
    \item $P_i (g_2, \sigma, j) = f_i(\sigma\cdot(n-1, n))$.
    \item $P_i (g_3, \sigma, j) = f_i(\sigma \cdot (n-1, j))$.
    \item $P_i (g_4, \sigma, j) = f_i(\sigma\cdot(n, j))$.
    \item $P_i (g_5, \sigma, j) = f_i(\sigma\cdot(j, n-1, n))$.
    \item $P_i (g_6, \sigma, j) = f_i(\sigma \cdot (j, n, n-1))$.
    \item $P_i(g_7, \sigma, j) = \dfrac{F_i(\sigma(j)) - \sum_{l \in \{3,6\}}P_i(g_l, \sigma, j)}{n-3}$
\end{enumerate}
with $F_i(j) = \sum_{\substack{l \in [n]\\ l \neq j}} f_i((n-1, j, n, l))$.

Here, $\sigma \cdot g$ represents the group operation combining two elements, while $\sigma(j)$ denotes the image of $j$ under $\sigma$.

By utilizing dynamic programming, we can compute these $P-$functions efficiently and precompute $s^{(k)}$ without resorting to a naive approach.

We break Lemma~\ref{lemma: precomputing s} into two propositions, proving its complexity and correctness separately. 
\subsection{Complexity}
\begin{proposition}
    Algorithm~\ref{alg: precomputation} requires $O(d^{k-2}n^3 + d^{k-1}n^2)$ operations and $O(dn^3+d^{k-1}n^2)$ space.
\end{proposition}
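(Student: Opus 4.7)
The plan is to split Algorithm~\ref{alg: precomputation} into its two natural phases and bound the operations and space of each separately, keeping $k$ constant (so that $|\mathbb{S}_{n-2}\backslash\mathbb{S}_n/\mathbb{S}_{n-2}|=7$ and $\binom{7}{k-2}=O(1)$), and using $|\mathbb{S}_n/\mathbb{S}_{n-2}|=n(n-1)=O(n^2)$.

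First I would analyze the preprocessing of the auxiliary tables $F_i$ and $P_i$. The outer loops range over $i\in[d]$ and $j\in[n-2]$, giving $O(dn)$ iterations. Inside, $F_i(j)$ is a sum of $n-1$ terms, so it takes $O(n)$ operations, and the inner loop over $\sigma\in\mathbb{S}_n/\mathbb{S}_{n-2}$ performs a constant number of $O(1)$ lookups/arithmetic operations for each of the seven entries $P_i(g_1,\sigma,j),\dots,P_i(g_7,\sigma,j)$, contributing $O(n^2)$ per $(i,j)$ pair. Thus this phase costs $O(dn\cdot(n+n^2))=O(dn^3)$ operations, and storing $P_i$ requires $O(d\cdot 7\cdot n^2\cdot n)=O(dn^3)$ space (while $F_i$ only needs $O(dn)$).

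Next I would analyze the main precomputation of $s^{(k)}$. The outer loops iterate over $I_{k-2}\in[d]^{k-2}$ (that is $d^{k-2}$ tuples), over $G_{k-2}$ in a set of size $\binom{7}{k-2}=O(1)$, and over $\sigma\in\mathbb{S}_n/\mathbb{S}_{n-2}$ ($O(n^2)$ values). The key observation is that $Q_{I_{k-2}}(G_{k-2},\sigma)=\sum_{j=1}^{n-2}\prod_{l=1}^{k-2}P_{i_l}(g_l,\sigma,j)$ does not depend on the last index $i_{k-1}$, so it is computed once per $(I_{k-2},G_{k-2},\sigma)$ in $O((n-2)(k-2))=O(n)$ operations, and then reused for all $d$ choices of $i\in[d]$, each requiring only $O(1)$ arithmetic to form $s^{(k)}_{I_{k-1}}(G_{k-2},\sigma)=\frac{f_{i_{k-1}}(\sigma)}{n-2}Q_{I_{k-2}}(G_{k-2},\sigma)$. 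The per-innermost-triple cost is therefore $O(n+d)$, and the grand total is
\begin{equation*}
d^{k-2}\cdot O(1)\cdot n^2\cdot O(n+d)=O(d^{k-2}n^3+d^{k-1}n^2).
\end{equation*}
Adding the $O(dn^3)$ cost of Phase~1 is absorbed (as $d^{k-2}\ge 1$), giving the claimed operation count.

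For the space bound, beyond the $O(dn^3)$ already incurred by $P_i$, the output $s^{(k)}_{I_{k-1}}(G_{k-2},\sigma)$ itself has $d^{k-1}\cdot\binom{7}{k-2}\cdot n(n-1)=O(d^{k-1}n^2)$ entries, while the intermediate $Q$ can be discarded as soon as the inner $i$-loop finishes and therefore contributes only lower-order storage. Summing yields the stated $O(dn^3+d^{k-1}n^2)$ space. The main subtlety — and the only place the argument could be mis-stated — is the loop-order observation that $Q$ is independent of $i_{k-1}$; without it one would naively charge $O(nd)$ per $(I_{k-2},G_{k-2},\sigma)$, obtaining an inflated $O(d^{k-1}n^3)$ operation count, so I would state this reuse explicitly before performing the product-of-loop-sizes calculation.
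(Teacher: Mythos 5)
Your proposal is correct and follows essentially the same route as the paper's proof: both decompose the algorithm into the computation of $F_i$, $P_i$, $Q_{I_{k-2}}$, and $s^{(k)}$, and bound the time and space of each stage by multiplying loop sizes by per-iteration costs, with the paper implicitly encoding your ``$Q$ is independent of $i_{k-1}$'' observation by counting only $d^{k-2}\binom{7}{k-2}n(n-1)$ distinct $Q$ entries. Your explicit remark about why the reuse of $Q$ is needed to avoid an inflated $O(d^{k-1}n^3)$ bound is a helpful clarification but does not change the argument.
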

\begin{proof}
    We analyze the complexity of each step in Algorithm~\ref{alg: precomputation}.

    \begin{itemize}
        \item Computing $F_i(j)$: Each function $F_i(j)$ requires $O(n)$ operations. Since we need to compute $O(dn)$ such functions, the total time and space complexity for this step is $O(dn^2)$.
        \item Computing $P_{i}(g, \sigma, j)$: Each function $P_{i}(g, \sigma, j)$ can be evaluated in constant time, $O(1)$. We need to evaluate $O(dn^3)$ such functions, leading to a total time and space complexity of $O(dn^3)$.
        \item Computing $Q_{I_{k-2}}(G_{k-2}, \sigma)$: For a fixed constant $k \in [3, 9]$, each $Q_{I_{k-2}}(G_{k-2}, \sigma)$ can be evaluated in $O(n)$ operations. 
        There are $d^{k-2} \binom{7}{k-2} n(n-1)$ entries, so the total time complexity of computing all the functions is $O(d^{k-2}n^3)$, and the space complexity is $O(d^{k-2}n^2)$.
        \item Computing $s^{(k)}_{I_{k-1}}(G_{k-2}, \sigma)$: Each $s^{(k)}_{I_{k-1}}(G_{k-2}, \sigma)$ can be evaluated in constant time, $O(1)$. There are $O(d^{k-1}n^2)$ such entries, requiring $O(d^{k-1}n^2)$ time and space.
    \end{itemize}
    Summing up the asymptotic complexities for time and space we obtain $O(d^{k-2}n^3 + d^{k-1}n^2)$ operations and $O(dn^3+d^{k-1}n^2)$ space.
\end{proof}

\subsection{Correctness}

\begin{proposition}
    Algorithm~\ref{alg: precomputation} precomputes $s^{(k)}_{I_{k-1}}(G_{k-2}, \sigma)$ for all the choices of the input parameters $I_{k-1} \in [d]^{k-1}$, $G_{k-2} \in (\mathbb{S}_{n-2}\backslash \mathbb{S}_n / \mathbb{S}_{n-2})^{k-2}$, $\sigma \in \mathbb{S}_n/\mathbb{S}_{n-2}$.
\end{proposition}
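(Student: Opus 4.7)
Since $f_{i_{k-1}}(\sigma)$ is pulled out of the sum verbatim in the last line of Algorithm~\ref{alg: precomputation}, the substantive task is to show that $\frac{1}{n-2} Q_{I_{k-2}}(G_{k-2}, \sigma)$ equals $\frac{1}{|\mathbb{S}_{n-2}|}\sum_{h \in \mathbb{S}_{n-2}}\prod_{l=1}^{k-2} f_{i_l}(\sigma h g_l)$. The plan is to reduce the sum over $h \in \mathbb{S}_{n-2}$ to a single sum over $j \in [n-2]$ by exploiting that each $f_{i_l}$ is a function on $\mathbb{S}_n/\mathbb{S}_{n-2}$, so $f_{i_l}(\sigma h g_l)$ depends on $h$ only through the pair $\bigl(h(g_l(n-1)), h(g_l(n))\bigr)$. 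Evaluating this at the seven double-coset representatives: $g_1, g_2$ give no $h$-dependence; $g_3, g_4, g_5, g_6$ depend on $h(n-2)$ alone; $g_7$ depends on the pair $(h(n-3), h(n-2))$. Crucially, because the doubly-reduced spectrum iterates $G_{k-2}$ over combinations of the double coset without repetition, at most one factor of the product involves $g_7$.

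Case 1 (no $g_7$ among $G_{k-2}$): the whole product depends on $h$ only through $j := h(n-2)$; since $(n-3)!$ elements of $\mathbb{S}_{n-2}$ fix this value, the sum collapses to $(n-3)! \sum_{j=1}^{n-2} \prod_{l=1}^{k-2} P_{i_l}(g_l,\sigma,j)$, where each $P_{i_l}(g_l,\sigma,j)$ is the common value of $f_{i_l}(\sigma h g_l)$ for any $h$ with $h(n-2)=j$. The algorithm's explicit formulas $P_{i}(g_l, \sigma, j) = f_i(\sigma \cdot c_l(j))$ for the six cycles $c_l(j)$ listed in Algorithm~\ref{alg: precomputation} are verified one representative at a time by directly comparing $(\sigma h g_l)(n-1), (\sigma h g_l)(n)$ against $(\sigma \cdot c_l(j))(n-1), (\sigma \cdot c_l(j))(n)$. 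Dividing by $|\mathbb{S}_{n-2}|=(n-2)!$ then yields $\frac{1}{n-2} Q$.

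Case 2 (exactly one $g_{l^\ast}=g_7$): the product depends on $(j_1, j_2) := (h(n-3), h(n-2))$ with $j_1 \ne j_2 \in [n-2]$, and $(n-4)!$ elements of $\mathbb{S}_{n-2}$ realize each such pair. Every factor with $l \ne l^\ast$ depends only on $j_2$, so setting $j = j_2$ and pushing the $j_1$-sum inside the $g_7$ factor leaves a one-index partial sum of $A$ indexed by $\sigma(j)$ and ranging over $\sigma\bigl([n-2]\setminus\{j\}\bigr)$. The key identity to prove is that this partial sum equals $F_i(\sigma(j)) - P_i(g_3,\sigma,j) - P_i(g_6,\sigma,j)$: $F_i(\sigma(j))$ is the full sum of $A$'s column (or row, depending on convention) at $\sigma(j)$ over $[n]\setminus\{\sigma(j)\}$, and the two subtracted terms correspond precisely to the entries at $\sigma(n-1)$ and $\sigma(n)$, which must be excluded because $\sigma([n-2]) = [n] \setminus \{\sigma(n-1), \sigma(n)\}$. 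Dividing by $n-3$ recovers $P_i(g_7,\sigma,j)$, and the combinatorial prefactor $(n-4)!/(n-2)! = 1/[(n-2)(n-3)]$ absorbs the $1/(n-3)$ so the answer is again $\frac{1}{n-2}Q$. The main obstacle is the bookkeeping in this inclusion--exclusion: correctly identifying which axis of $A$ the $F_i$ sum runs over, verifying that $P_i(g_3)$ and $P_i(g_6)$ have matching row/column indexing so the subtraction removes the correct two terms, and checking that the resulting one-index expression actually matches the $j_1$-sum of the $g_7$ factor under the algorithm's cycle conventions. Once this identity is established, combining Cases~1 and~2 with the $f_{i_{k-1}}(\sigma)$ prefactor completes the proof.
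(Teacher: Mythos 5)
Your proposal follows essentially the same route as the paper's proof: decompose the average over $\mathbb{S}_{n-2}$ into the $n-2$ classes $\{h : h(n-2)=j\}$, verify the seven $P$-functions representative by representative, and recover $P_i(g_7,\sigma,j)$ from $F_i(\sigma(j))$ by subtracting the $P_i(g_3,\sigma,j)$ and $P_i(g_6,\sigma,j)$ terms. Your explicit split into the no-$g_7$ and exactly-one-$g_7$ cases --- justified by the observation that each non-$g_7$ factor is constant on each class and that the combinations-without-repetition restriction permits at most one $g_7$ factor --- is in fact slightly more careful than the paper, which only analyzes the single-factor sum and asserts that this \quoted{suffices} for the product.
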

\begin{proof}
    The core idea behind the correctness of Algorithm~\ref{alg: precomputation} lies in breaking down the sum over $\mathbb{S}_{n-2}$.
    In particular, the crucial fact is that the sum over $\mathbb{S}_{n-2}$
    \begin{align}
    \label{eq: correctness main}
        \frac{1}{|\mathbb{S}_{n-2}|}\sum_{h \in \mathbb{S}_{n-2}} \prod_{l=1}^{k-2} f_{i_l}(\sigma h g_l) 
    \end{align}
    can be broken as a sum of $n-2$ partial sums 
    \begin{align}
    \label{eq: correctness prod term}
        \frac{1}{n-2} \sum_{j=1}^{n-2} \prod_{l=1}^{k-2} P_{i_l}(g_l, \sigma, j).
    \end{align}

    To illustrate this, we study the term
    \begin{align}
    \label{eq: correctness single term}
        \frac{1}{|\mathbb{S}_{n-2}|} \sum_{h \in \mathbb{S}_{n-2}} f_i(\sigma h g)
    \end{align}
    for all $g \in \mathbb{S}_{n-2}\backslash \mathbb{S}_n / \mathbb{S}_{n-2}$ and show how it naturally breaks into $n-2$ partial sums indexed by $j \in [n-2]$.
    Specifically, we show that each $j$ corresponds to the $(n-3)!$ elements $h \in \mathbb{S}_{n-2}$ such that $h(n-2) = j$.
    With this additional fact, analyzing (\ref{eq: correctness single term}) suffices to prove the equivalence of (\ref{eq: correctness main}) and (\ref{eq: correctness single term}).

    In the following, for each of the $7$ group elements, we consider the action of $\sigma h g$ on $n$ and $n-1$ and simplify the expression.
    For an ease of notation, we rewrite each $f_i$ as $f_i(\sigma) = f'_i(\sigma(n-1), \sigma(n))$, with $f'_i: [n]^2 \rightarrow \mathbb{C}$.
    
    \emph{1. $g_1 = ()$}
        \begin{align}
            &\sigma h g(n) = \sigma h (n) = \sigma (n) \\
            &\sigma h g(n-1) = \sigma h (n-1) = \sigma (n-1).
        \end{align}
        Hence, 
        \begin{align}
            \frac{1}{|\mathbb{S}_{n-2}|}\sum_{h \in \mathbb{S}_{n-2}} f'_i(\sigma h g(n-1), \sigma h g(n)) = f'_i(\sigma(n-1), \sigma(n)) = f_i(\sigma)
        \end{align}
    
        So the term breaks in $\frac{1}{n-2} \sum_{j=1}^{n-2} P_i(g_1, \sigma, j)$. 
        The choice of how $j$ divides $\mathbb{S}_{n-2}$ is arbitrary here, so we can satisfy $h(n-2) = j$.
    
    \emph{2. $g_2 = (n-1, n)$}
        \begin{align}
            &\sigma h g (n) = \sigma h (n-1) = \sigma(n-1)\\
            &\sigma h g (n-1) = \sigma h (n) = \sigma(n)
        \end{align}
        Therefore, 
        \begin{align}
            \frac{1}{|\mathbb{S}_{n-2}|}\sum_{h \in \mathbb{S}_{n-2}} f'_i(\sigma h g(n-1), \sigma h g(n)) = f'(\sigma(n), \sigma(n-1)) = f(\sigma \cdot (n-1,n)).
        \end{align}
    
        So the term breaks in $\frac{1}{n-2} \sum_{j=1}^{n-2} P_i(g_2, \sigma, j)$.
        Similarly to the previous case, the choice of how $j$ divides $\mathbb{S}_{n-2}$ is arbitrary, so we satisfy $h(n-2) = j$.
    
    \emph{3. $g_3 = (n-2, n-1)$}
        \begin{align}
            &\sigma h g (n) = \sigma h (n) = \sigma(n)\\
            &\sigma h g (n-1) = \sigma h (n-2) = \sigma(j) \text{ for } j \in [n-2]
        \end{align}
        There are $(n-3)!$ elements $h \in \mathbb{S}_{n-2}$ that can produce a certain $j$, and there are $n-2$ possible $j$.
        Grouping these terms, we can write
        \begin{align}
            \frac{1}{|\mathbb{S}_{n-2}|}\sum_{h \in \mathbb{S}_{n-2}} f'_i(\sigma h g(n-1), \sigma h g(n)) = \frac{1}{(n-2)!} \sum_{j=1}^{n-2} (n-3)! f'(\sigma(j), \sigma(n)) =  \frac{1}{n-2} \sum_{j=1}^{n-2} f(\sigma \cdot (n-1,j)).
        \end{align}
    
        So the term breaks in $\frac{1}{n-2} \sum_{j=1}^{n-2} P_i(g_3, \sigma, j)$.
    
    \emph{4. $g_4 = (n-2, n)$}
        \begin{align}
            &\sigma h g (n) = \sigma h (n-2) = \sigma(j) \text{ for } j \in [n-2]\\
            &\sigma h g (n-1) = \sigma h (n-1) = \sigma(n-1)
        \end{align}
        Similarly to the case above,
        \begin{align}
            \frac{1}{|\mathbb{S}_{n-2}|}\sum_{h \in \mathbb{S}_{n-2}} f'_i(\sigma h g(n-1), \sigma h g(n)) = \frac{1}{(n-2)!} \sum_{j=1}^{n-2} (n-3)! f'(\sigma(n-1), \sigma(j)) =  \frac{1}{n-2} \sum_{j=1}^{n-2} f(\sigma \cdot (n,j)).
        \end{align}
    
        So the term breaks in $\frac{1}{n-2} \sum_{j=1}^{n-2} P_i(g_4, \sigma, j)$.
    
    \emph{5. $g_5 = (n-2, n-1, n)$}
        \begin{align}
            &\sigma h g (n) = \sigma h (n-2) = \sigma(j) \text{ for } j \in [n-2]\\
            &\sigma h g (n-1) = \sigma h (n) = \sigma(n)
        \end{align}
        Hence,
        \begin{align}
            \frac{1}{|\mathbb{S}_{n-2}|}\sum_{h \in \mathbb{S}_{n-2}} f'_i(\sigma h g(n-1), \sigma h g(n)) = \frac{1}{(n-2)!} \sum_{j=1}^{n-2} (n-3)! f'(\sigma(n), \sigma(j)) =  \frac{1}{n-2} \sum_{j=1}^{n-2} f(\sigma \cdot (j, n-1, n)).
        \end{align}
    
        So the term breaks in $\frac{1}{n-2} \sum_{j=1}^{n-2} P_i(g_5, \sigma, j)$.
    
    \emph{6. $g_6 = (n-2, n, n-1)$}
        \begin{align}
            &\sigma h g (n) = \sigma h (n-1) = \sigma(n-1)\\
            &\sigma h g (n-1) = \sigma h (n-2) = \sigma(j)  \text{ for } j \in [n-2]
        \end{align}
        Therefore,
        \begin{align}
            \frac{1}{|\mathbb{S}_{n-2}|}\sum_{h \in \mathbb{S}_{n-2}} f'_i(\sigma h g(n-1), \sigma h g(n)) &= \frac{1}{(n-2)!} \sum_{j=1}^{n-2} (n-3)! f'(\sigma(j), \sigma(n-1)) \\
            &=  \frac{1}{n-2} \sum_{j=1}^{n-2} f(\sigma \cdot (j, n, n-1)).
        \end{align}
    
        So the term breaks in $\frac{1}{n-2} \sum_{j=1}^{n-2} P_i(g_6, \sigma, j)$.

    \emph{7. $g = (n-3, n-1)(n-2, n)$}
        \begin{align}
            &\sigma h g (n) = \sigma h (n-2) = \sigma(j) \text{ for } j \in [n-2]\\
            &\sigma h g (n-1) = \sigma h (n-3) = \sigma(l) \text{ for } l \in [n-2], l \neq j
        \end{align}
    
        Then, we obtain
        \begin{align}
            \frac{1}{|\mathbb{S}_{n-2}|}\sum_{h \in \mathbb{S}_{n-2}} f'_i(\sigma h g(n-1), \sigma h g(n)) &= \frac{1}{(n-2)!} \sum_{j=1}^{n-2} \sum_{\substack{l=1\\ l\neq j}}^{n-2}(n-4)! f'(\sigma(j), \sigma(l)) \\
            &=  \frac{1}{n-2} \sum_{j=1}^{n-2} \frac{1}{n-3} \sum_{\substack{l=1\\ l\neq j}}^{n-2} f'(\sigma(j), \sigma(l)).
        \end{align}
    
        So the term breaks in $\frac{1}{n-2} \sum_{j=1}^{n-2} P_i(g_7, \sigma, j)$.
        Here, $P_i(g_7, \sigma, j) = \frac{1}{n-3} \sum_{\substack{l=1\\ l\neq j}}^{n-2} f'(\sigma(j), \sigma(l))$.
        
        To further optimize the computation of $P_i(g_7, \sigma, j)$, we can compute partial sums $F_i(j) = \sum_{\substack{l= 1\\j \neq i}}^{n} f'_i(j, l)$ and write
        \begin{align}
            P_i(g_7, \sigma, j) =  \frac{F_i(\sigma(j)) - f'_i(\sigma(j), \sigma(n-1)) - f'_i(\sigma(j), \sigma(n))}{n-3} .
        \end{align}
        Using the cyclic notation, we obtain $P_i(g_7, \sigma, j) = \dfrac{F_i(\sigma(j)) - \sum_{l \in \{3,6\}}P_i(g_l, \sigma, j)}{n-3}$
        and $F_i(j) = \sum_{\substack{l= 1\\j \neq i}}^{n} f_i((n-1, j, n, l))$.
\end{proof}

\section{Further experiments}
\label{apx: further experiments}
In this section, we report further experiments carried on during the conference rebuttal period. 
These results are preliminary and intended to provide further insight, though they merit deeper future analysis.

\subsection{Scalability and other real-world datasets}
To further evaluate the scalability and practical relevance of the multi-orbit $k$-Spectra, we extended the multi-orbit experiments on QM7 presented in Section~\ref{subsec: exp on multi orbits} (Table~\ref{tab: qm7}) to two larger molecular datasets: QM9 and ZINC.

Both datasets were loaded through $torch.geometric$.
QM9 consists of $130{,}831$ molecules, each with up to $9$ heavy atoms (maximum $29$ nodes). 
We used the first $100{,}000$ molecules for training and the remaining $30{,}831$ for testing.
ZINC contains $249{,}456$ molecules with up to $38$ nodes.
We used the default train/test/validation split from \texttt{torch.geometric}, training on $220{,}011$ molecules and testing on $5{,}000$, ignoring the validation set for simplicity.
Both tasks are regression problems. For QM9, the goal is to predict the dipole moment (target label 0 in \texttt{torch.geometric}); for ZINC, the task is to predict penalized logP, also referred to as constrained solubility.

Following the setup used for QM7, we compare different graph representations: the Single-Orbit Skew Spectrum on the off-diagonals, a $3$-Orbit Skew Spectrum, and eigenvalues of adjacency and Laplacian matrices.
For QM9, the three orbits are defined as follows: (1) the Coulomb matrix $C$; (2) an edge attribute obtained by summing the four edge attributes available in the dataset; and (3) the sixth node feature.
For ZINC, the orbits correspond to: (1) the $G$ matrix; (2) the single edge attribute; and (3) the single node feature.
We trained multiple regression models on these representations: Extreme Gradient Boosting (XGB), Gradient Boosting Regressor (GBR), Elastic Net (EN), Linear Regression (Linear)~\cite{pedregosa2011scikit}.

Results are presented in Table~\ref{apx_table: qm9 zinc}.
For QM9, , the 3-orbit representation consistently outperforms the single-orbit skew spectrum and is competitive with Laplacian and adjacency-based features, with the Laplacian eigenvalues achieving the best accuracy. 
In ZINC, the 3-orbit representation is again comparable to other methods, although the eigenvalues of the Coulomb matrix $C$ yield the best performance.
We imagine that a more thoughtful use of the $k$-Spectra could yield better results.

\begin{table}[t]
	\centering
	\caption{Regression on QM9 and ZINC with different representations. We tested the following machine learning models: Extreme Gradient Boosting (Xgboost), Gradient Boosting Regressor (GBR), Elastic Net (EN), Linear Regression (Linear) using the default parameters of sk-learn. The error is measured as Mean Absolute Error.}
    \label{apx_table: qm9 zinc}
	\vskip 0.15in
    \begin{tabular}{lccccc}
        \toprule
        Dataset & Representation & Xgboost & GBR & EN & Linear  \\
		\midrule
        \multirow{4}{*}{QM9}
            & Single-orbit     & 1.1560   & 1.1552 & 1.1571  & 1.1558 \\
            & 3-orbits         & 1.1570   & 1.1528 & 1.1559  & 1.1540 \\
            & $C$'s eigs       & 0.9602   & 0.9640 & 1.1571  & 0.9899 \\
            & Laplacian's eigs & 0.9245   & 0.9377 & 1.0773  & 0.9704 \\
		\midrule
        \multirow{4}{*}{ZINC}
            & Single-orbit     & 1.5442   & 1.5432 & 1.5422  & 1.5423 \\
            & 3-orbits         & 1.5500   & 1.5441 & 1.5422  & 1.5423 \\
            & $C$'s eigs       & 1.3208   & 1.3876 & 1.5297  & 1.4249 \\
            & Laplacian's eigs & 1.3350   & 1.3933 & 1.5353  & 1.4324 \\
		\bottomrule
	\end{tabular}
    \vskip -0.1in
\end{table}

\subsection{Integration with GNNs}

To evaluate the practical impact of incorporating our graph representation into deep learning pipelines, we integrate the $k$-Spectra features into a standard Graph Neural Network.
Specifically, we integrated our $k$-Spectra with the HGP-SL model (Hierarchical Graph Pooling with Structure Learning)~\cite{zhang2019hierarchical}.

We follow a straightforward integration strategy: after computing a whole-graph embedding using our Doubly-Reduced $k$-Spectra, we concatenate this vector with the graph embedding produced by the final convolutional layer of HGP-SL.
The concatenated representation is then passed to a fully connected layer that produces the final prediction. 
This design corresponds to the \quoted{post-concatenation} approach discussed in Section~\ref{sec: relationship to wl and gnn}.

The model is trained and evaluated on the PROTEINS dataset, which contains protein structures represented as graphs, filtered to contain only graphs with $3$ to $30$ nodes.
We use the same training protocol and hyperparameters as in the original HGP-SL work, including a learning rate of $0.001$, a batch size of $32$, and a maximum of $1000$ training epochs. Early stopping is applied based on validation loss. We experiment with two dropout rates: the original $0.5$ and a lower value of $0.2$.

We test both single-orbit and multi-orbit variants of DRkSP, using $k \in \{3, 4, 5\}$.
For each configuration, we report the average accuracy, standard deviation, and maximum accuracy across six random train/test splits.
Table~\ref{apx_tab:gnn-proteins} summarizes the results.
The baseline HGP-SL model achieves an average accuracy of $0.508$ (dropout $0.5$) and $0.518$ (dropout $0.2$).
Augmenting HGP-SL with the spectra consistently improves accuracy across all settings. 
Notably, moving from $k=3$ to $k=4$ generally improves performance.
Multi-orbit variants, built using the adjacency matrix and one orbit per node feature, outperform single-orbit variants in most cases.
These results support the hypothesis that $k$-Spectra features provide complementary information that enhances the learning capacity of GNNs.

\begin{table}[t]
\caption{Test accuracy on the PROTEINS dataset. Average $\pm$ standard deviation (Max) over six splits. $k$-Spectra features are concatenated to the HGP-SL embeddings, before the prediction layer.}
\label{apx_tab:gnn-proteins}
\vskip 0.15in
\centering
\resizebox{\textwidth}{!}{%
\begin{tabular}{|c|ccc|ccc|}
\cline{1-7}
\textbf{Baseline}
& \multicolumn{3}{c|}{\textbf{PR=0.5} -- 0.508 $\pm$ 0.004 (0.615)}
& \multicolumn{3}{c|}{\textbf{PR=0.2} -- 0.518 $\pm$ 0.004 (0.615)} \\
& k=3 & k=4 & k=5 & k=3 & k=4 & k=5 \\
\cline{1-7}
\textbf{Single-O}
& 0.609 $\pm$ 0.020 (0.769)
& 0.622 $\pm$ 0.011 (0.769)
& 0.612 $\pm$ 0.010 (0.692)
& 0.642 $\pm$ 0.016 (0.738)
& 0.625 $\pm$ 0.001 (0.662)
& 0.588 $\pm$ 0.009 (0.662)
\\
\textbf{Multi-O}
& 0.695 $\pm$ 0.002 (0.754)
& 0.683 $\pm$ 0.000 (0.708)
& 0.582 $\pm$ 0.023 (0.738)
& 0.665 $\pm$ 0.003 (0.723)
& 0.680 $\pm$ 0.003 (0.738)
& 0.665 $\pm$ 0.010 (0.815)
\\
\cline{1-7}
\end{tabular}%
}
\vskip -0.1in

\end{table}

\end{document}